\documentclass{article}
\usepackage[preprint]{neurips_2026}

\usepackage[utf8]{inputenc}
\usepackage[T1]{fontenc}
\usepackage{amsmath,amssymb,amsthm}
\usepackage{booktabs}
\usepackage{multirow}
\usepackage{graphicx}
\usepackage{caption}
\usepackage{tikz}
\usepackage{url}
\usepackage{hyperref}
\usepackage{microtype}
\usepackage{xcolor}
\usepackage{enumitem}
\usepackage{fontawesome5}
\usepackage{hyperref}
\usepackage{xcolor}

\definecolor{githubblue}{HTML}{0366D6}

\hypersetup{
    colorlinks=true,
    urlcolor=githubblue,
}

\newtheorem{proposition}{Proposition}

\title{Decoupling KL and Trajectories: A Unified Perspective for SFT, DAgger, Offline RL, and OPD in LLM Distillation}

\author{
\makebox[\textwidth][c]{\textbf{Anhao Zhao\textsuperscript{1,2}\quad
Haoran Xin\textsuperscript{4}\quad
Yingqi Fan\textsuperscript{1}\quad
Junlong Tong\textsuperscript{1,3}\quad
Wenjie Li\textsuperscript{2}\quad
Xiaoyu Shen\textsuperscript{1}\thanks{Corresponding Author}}}\\
\normalfont
\makebox[\textwidth][c]{\textsuperscript{1}Eastern Institute of Technology, Ningbo\quad
\textsuperscript{2}The Hong Kong Polytechnic University}\\[-0.2ex]
\makebox[\textwidth][c]{\textsuperscript{3}Shanghai Jiao Tong University\quad
\textsuperscript{4}Thrust of Artificial Intelligence,}\\[-0.2ex]
\makebox[\textwidth][c]{The Hong Kong University of Science and Technology (Guangzhou)}\\[-0.2ex]
\makebox[\textwidth][c]{\texttt{anhao.zhao@connect.polyu.hk}\quad
\texttt{xyshen@eitech.edu.cn}}
}

\begin{document}

\maketitle

\begin{abstract}
Knowledge distillation has become central to LLM post-training, yet its design
space remains poorly understood, especially alongside reinforcement learning (RL). We show that the prevailing paradigms, off-policy distillation
and on-policy distillation (OPD), implicitly \textbf{couple two orthogonal choices:
prefix source and token-level KL direction}. This coupling follows from
decomposing sequence-level KL over autoregressive response distributions:
\textit{forward KL pairs teacher prefixes with token-level forward KL, and reverse KL pairs student prefixes with token-level reverse KL}. We argue that this
coupling is not intrinsic: decoupling the two axes yields four valid objectives. We establish gradient-level identities showing that
forward KL gives SFT-style cross-entropy matching with teacher soft targets,
whereas reverse KL gives an RL-style policy-gradient objective with a dense
teacher-student log-ratio reward, connecting the four objectives to off-policy
SFT, DAgger-style on-policy SFT, offline-RL-style distillation, and OPD.
We conduct an extensive controlled study on math reasoning,
evaluating the four objectives both as standalone distillation methods and as
initializations for subsequent RL. The results reveal three tradeoffs: \textit{KL
direction induces an accuracy--entropy tradeoff, prefix source induces a
quality--compute tradeoff, and training length induces an accuracy--stability
tradeoff}. Motivated by these findings, we propose \textbf{KL mixing} and an
\textbf{entropy-gated length curriculum}. KL mixing shows that long-sequence
distillation requires substantial forward-KL weight to prevent entropy collapse
and length inflation without sacrificing accuracy. The entropy-gated length curriculum improves Avg@k and Pass@k by \textbf{3.6} and up to \textbf{5.8} points, and reduces average response length by roughly $\boldsymbol{3\times}$ relative to fixed long-horizon training.
Together, our results provide a framework and practical methods for designing reasoning distillation objectives that balance accuracy, diversity, compute, and RL behavior.
We release our code at \textcolor{githubblue}{\faGithub}\ \href{https://github.com/EIT-NLP/Decoupled-Distill}{\textbf{\texttt{EIT-NLP/Decoupled-Distill}}}.
\end{abstract}
\section{Introduction}
\label{introduction}

The capabilities of Large Language Models (LLMs) are increasingly shaped by post-training, where knowledge distillation~\citep{hinton2015kd} and reinforcement learning (RL) have together become the central recipe for building frontier reasoning models~\citep{deepseek-r1, qwen3}. Modern pipelines typically follow one of two paradigms: off-policy distillation on teacher-generated traces followed by RL~\citep{deepseek-r1, muennighoff2025s1, openthoughts}, or on-policy distillation (OPD) interleaved with RL~\citep{qwen3, mimo-v2, glm5, qwen3.5_omni}. Despite the prevalence of these recipes, the design space of distillation, and how its choices interact with RL, remains poorly understood.

\textbf{Off-policy distillation} aligns the student's token-level output distribution with the teacher's along teacher-generated prefixes. In practice, this is often implemented as supervised fine-tuning (SFT) on teacher-generated traces, which treats sampled teacher outputs as hard labels and can be viewed as a Monte Carlo approximation to distillation from the teacher's output distribution. This recipe has been widely adopted in recent reasoning-model pipelines, including DeepSeek-R1~\citep{deepseek-r1}, s1~\citep{muennighoff2025s1}, and OpenThinker~\citep{openthoughts}.
\textbf{On-policy distillation} (OPD) instead lets the student generate its own rollouts at training time and uses the teacher's per-token log-probabilities as a dense supervision signal on the states the student visits~\citep{gu2023minillm, agarwal2023gkd, lu2025opd}. By supervising student-visited states, OPD mitigates exposure bias~\citep{bengio2015scheduled}, while its dense token-level feedback complements the sparse outcome rewards of RL. OPD has been incorporated into the post-training pipelines of Qwen3~\citep{qwen3}, MiMo~\citep{mimo-v2}, and GLM-5~\citep{glm5}.

Despite their differences, the two prevailing paradigms share the same implicit
structure: they couple prefix source with token-level KL direction. Off-policy
distillation uses teacher-generated prefixes with forward-KL supervision,
whereas OPD uses student-generated prefixes with reverse-KL
supervision. This pairing follows directly from sequence-level KL: by the
autoregressive chain rule, $\mathrm{KL}(\pi_T\|\pi_S)$ decomposes into
token-level forward KL on teacher prefixes, while $\mathrm{KL}(\pi_S\|\pi_T)$
decomposes into token-level reverse KL on student prefixes. We argue that \textbf{this
coupling is not intrinsic} at the token level: prefix source specifies where
supervision is applied, while KL direction specifies how teacher and student
token distributions are compared. Taking their Cartesian
product yields four valid objectives. The two off-diagonal objectives, namely
teacher prefixes with reverse KL and student prefixes with forward KL, have not,
to our knowledge, been systematically studied.

\begin{figure}
    \centering
    \includegraphics[width=\linewidth]{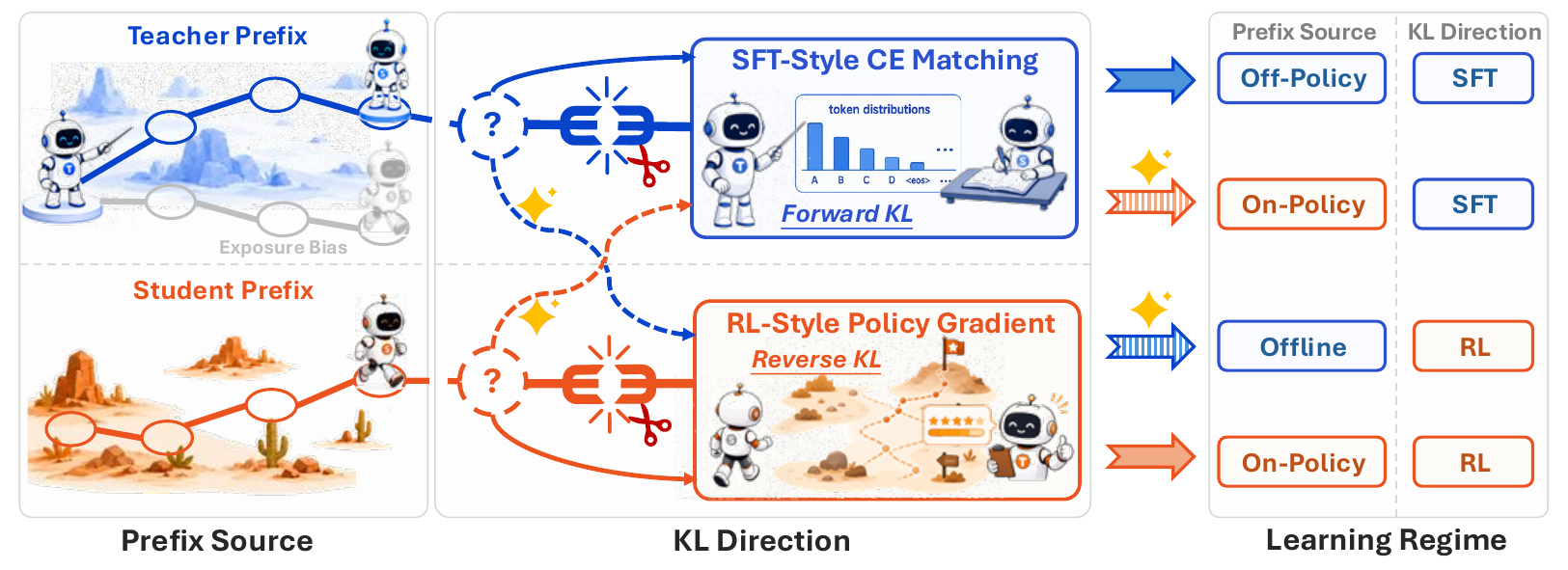}
    \vspace{-20pt}
    \caption{\small Overview of our decoupled distillation framework. Conventional objectives couple prefix source with KL direction; decoupling these axes yields four objectives that correspond to classical training regimes.}
    \label{fig:illustration}
\end{figure}

From a theoretical perspective, this orthogonal decomposition connects the four
objectives to well-established training paradigms, , as summarized in
Figure~\ref{fig:illustration}. Prefix
source determines the policy regime: \emph{teacher prefixes are
off-policy} for the student, as they are generated by another policy, whereas
\emph{student prefixes are on-policy}. KL direction determines the learning objective: we establish that
forward KL gives the gradient of SFT-style cross-entropy matching with teacher
soft targets, while reverse KL gives an RL-style policy-gradient objective with
a dense teacher--student log-ratio reward. Combining these correspondences
interprets the \textbf{four objectives as classical training regimes}: teacher-prefix forward KL
instantiates off-policy SFT \citep{deepseek-r1}; student-prefix forward KL matches
DAgger-style on-policy SFT \citep{ross2011dagger}; teacher-prefix reverse KL realizes an
offline-RL-style distillation objective \citep{levine2020offlinerl}; and student-prefix reverse KL yields OPD, viewed as dense-reward on-policy RL \citep{lu2025opd}.
Our theoretical framework therefore unifies training recipes often studied
separately as different instantiations of the same two design choices in
autoregressive distillation.

Guided by this lens, we comprehensively investigate the four decoupled objectives with Qwen3-4B/8B
teachers and a Qwen3-0.6B student on mathematical reasoning. We track accuracy,
predictive entropy, and response length during training, and evaluate Avg@$k$,
Pass@$k$, and response length on AIME24, AMC23, MATH500, and GSM8K. Each objective is studied both as standalone distillation and
as an initialization for subsequent RL, revealing three intertwined tradeoffs.
First, \textbf{KL direction induces an
accuracy--entropy tradeoff}: reverse KL
improves Avg@$k$ but sharpens the student distribution, reducing diversity,
weakening Pass@$k$, and making downstream RL less reliable; forward KL preserves
entropy and supports more stable RL improvement. Second, \textbf{prefix source induces a quality--compute tradeoff}: student prefixes perform better under
matched training steps by supervising student-visited states, whereas teacher
prefixes can be more compute-effective under matched FLOPs by reusing offline
trajectories and cached teacher logits.
Third, \textbf{training length induces an
accuracy--stability tradeoff}: long-sequence distillation improves reasoning accuracy but, under reverse KL,
can drive entropy close to zero and cause severe length inflation;
short-sequence distillation is more stable but less accurate. These findings show that the best standalone
distillation objective is not necessarily the best objective for a
distillation-then-RL pipeline, and that effective reasoning post-training must
balance accuracy, diversity, compute, and downstream RL behavior.

To navigate these tradeoffs, we propose two targeted methods. For the KL-direction tradeoff, we introduce \textbf{KL mixing}, which forms a weighted combination of forward and reverse token-level KL. In long-sequence distillation, we find that a high forward-KL weight is crucial: it prevents entropy collapse and length inflation with little or no accuracy loss, while high reverse-KL mixtures remain unstable. For the training-length tradeoff, we introduce an \textbf{entropy-gated length curriculum}, which starts from a short training horizon and increases the length only while predictive entropy remains above a stability threshold. Compared with fixed 4096-token training, it improves Avg@$k$ by $\textbf{3.6}$ points, raises Pass@$k$ by up to $\textbf{5.8}$ points, and reduces average response length by roughly $\boldsymbol{3\times}$. Together, these methods turn the decoupled view from an explanatory framework into a practical tool for balancing accuracy, diversity, and generation stability.


\section{Disentangling KL Direction and Prefix Source}
\label{sec:framework}

\subsection{Problem Setup: Sequence-Level KL for Reasoning Distillation}
\label{sec:problem-definition}

We study mathematical reasoning, where an LLM maps a prompt $x$ to a response
$y=(y_1,\ldots,y_L)$ consisting of a reasoning trace and a final answer.
Reasoning distillation transfers this behavior from a strong teacher to
a trainable student by matching their response distributions
$p_T(y\mid x)$ and $q_\theta(y\mid x)$~\citep{hinton2015kd,deepseek-r1}. At the
sequence level, this matching can use either forward or reverse
KL~\citep{sequence_level_kd,gu2023minillm}:
\[
\mathrm{KL}(p_T \| q_\theta)(x)
=
\mathbb{E}_{y\sim p_T(\cdot\mid x)}
\!\left[
\log \frac{p_T(y\mid x)}{q_\theta(y\mid x)}
\right],
\quad
\mathrm{KL}(q_\theta \| p_T)(x)
=
\mathbb{E}_{y\sim q_\theta(\cdot\mid x)}
\!\left[
\log \frac{q_\theta(y\mid x)}{p_T(y\mid x)}
\right].
\]
Forward KL averages over teacher responses, whereas reverse KL averages over
student responses; the two objectives also use opposite teacher-student
log-ratio directions.

\subsection{From Coupled Sequence-Level KL to Decoupled Token-Level Objectives}
\label{sec:sequence-kl-coupling}

Teacher and student response distributions factorize autoregressively:
\begin{equation}
\label{eq:ar-factorization}
p_T(y\mid x)=\prod_{t=1}^{L}p_T(y_t\mid x,y_{<t}),
\qquad
q_\theta(y\mid x)=\prod_{t=1}^{L}q_\theta(y_t\mid x,y_{<t}).
\end{equation}
We write $s_t=(x,y_{<t})$ for the prefix, or generation state, at step $t$.
Let $d_T^t$ and $d_S^t$ denote the distributions over prefixes induced by
teacher-generated and student-generated responses, respectively.
Applying the autoregressive factorization to the sequence-level KL objectives
gives
\begin{equation}
\label{eq:fwd-kl}
\mathrm{KL}(p_T\|q_\theta)(x)
=
\sum_{t=1}^{L}
\mathbb{E}_{s_t\sim d_T^t}
\!\left[
\mathrm{KL}(p_T(\cdot\mid s_t)\|q_\theta(\cdot\mid s_t))
=
\mathbb{E}_{a\sim p_T(\cdot\mid s_t)}
\!\left[
\log \frac{p_T(a\mid s_t)}{q_\theta(a\mid s_t)}
\right]
\right],
\end{equation}
\begin{equation}
\label{eq:rev-kl}
\mathrm{KL}(q_\theta\|p_T)(x)
=
\sum_{t=1}^{L}
\mathbb{E}_{s_t\sim d_S^t}
\!\left[
\mathrm{KL}(q_\theta(\cdot\mid s_t)\|p_T(\cdot\mid s_t))
=
\mathbb{E}_{a\sim q_\theta(\cdot\mid s_t)}
\!\left[
\log \frac{q_\theta(a\mid s_t)}{p_T(a\mid s_t)}
\right]
\right].
\end{equation}
where $a$ denotes the next token.
\textbf{These decompositions reveal an implicit coupling in sequence-level KL
distillation}: forward sequence KL pairs teacher-induced prefixes with
token-level forward KL, whereas reverse sequence KL pairs student-induced
prefixes with token-level reverse KL. This coupling is mirrored in current
post-training pipeline. Off-policy distillation trains on fixed
teacher-generated traces and is commonly implemented with SFT, where sampled
teacher tokens serve as hard labels, a Monte Carlo approximation to token-level
forward-KL matching~\citep{deepseek-r1,muennighoff2025s1,openthoughts}.
In contrast, on-policy distillation samples prefixes from the current student
and uses teacher feedback through reverse-KL matching on student-visited
states~\citep{gu2023minillm,agarwal2023gkd,lu2025opd}.

The sequence-level derivation explains the conventional pairings, but
\textbf{the coupling is not intrinsic}: KL direction and prefix source can be
chosen independently. KL direction specifies how teacher and student token
distributions are compared at a prefix, using either
$\mathrm{KL}(p_T(\cdot\mid s_t)\|q_\theta(\cdot\mid s_t))$ or
$\mathrm{KL}(q_\theta(\cdot\mid s_t)\|p_T(\cdot\mid s_t))$; prefix source
specifies where the loss is evaluated, with prefixes drawn from $d_T^t$ or
$d_S^t$. Their Cartesian product yields four token-level distillation
objectives. Existing practice primarily uses the two sequence-level pairings,
teacher prefixes with forward KL and student prefixes with reverse KL. The two
off-diagonal objectives, teacher prefixes with reverse KL and student prefixes
with forward KL, are equally well-defined but have not been systematically
studied.


\subsection{Learning-Regime Interpretation of the Decoupled Objectives}
\label{sec:gradient-equivalence}

\textbf{The prefix-source axis determines the policy regime}: teacher-induced
prefixes are off-policy for the student, whereas student-induced prefixes are
on-policy. \textbf{The KL-direction axis determines the learning objective}:
Proposition~\ref{prop:sl-rl} shows that forward KL gives the gradient of
SFT-style cross-entropy matching with teacher soft targets, while reverse KL
gives an RL-style policy-gradient objective with a dense teacher--student
log-ratio reward.

\begin{proposition}[Token-level KL gradients]
\label{prop:sl-rl}
Fix a prefix $s_t$ and treat $p_T(\cdot\mid s_t)$ as independent of $\theta$.
Let $q_\theta(\cdot\mid s_t)$ be differentiable. Then:
\begin{enumerate}[label=\emph{(\roman*)}, leftmargin=1.5em, itemsep=0.35em]
  \item The forward token-level KL satisfies
  \[
    \nabla_\theta
    \mathrm{KL}\!\bigl(p_T(\cdot\mid s_t)\,\|\,q_\theta(\cdot\mid s_t)\bigr)
    =
    -\,\mathbb{E}_{y\sim p_T(\cdot\mid s_t)}
    \!\left[\nabla_\theta \log q_\theta(y\mid s_t)\right].
  \]
  Thus, forward KL is gradient-equivalent to cross-entropy matching with
  teacher soft targets; SFT on teacher-sampled tokens is its Monte Carlo
  hard-label form.

  \item The reverse token-level KL satisfies
  \[
    \nabla_\theta
    \mathrm{KL}\!\bigl(q_\theta(\cdot\mid s_t)\,\|\,p_T(\cdot\mid s_t)\bigr)
    =
    \mathbb{E}_{y\sim q_\theta(\cdot\mid s_t)}
    \!\left[
      \bigl(\log q_\theta(y\mid s_t)-\log p_T(y\mid s_t)\bigr)
      \nabla_\theta \log q_\theta(y\mid s_t)
    \right].
  \]
  Hence, minimizing reverse KL gives a REINFORCE-style ascent direction with
  dense reward
  \[
    r(s_t,y)=\log p_T(y\mid s_t)-\log q_\theta(y\mid s_t),
  \]
  treating $r$ as scalar feedback, i.e., stopping gradients through $r$.
\end{enumerate}
\end{proposition}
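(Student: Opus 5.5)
Both parts follow from writing each token-level KL as a finite sum over the vocabulary and differentiating term by term. The vocabulary $\mathcal V$ is finite and $q_\theta(\cdot\mid s_t)$ is differentiable, so we may exchange $\nabla_\theta$ with the sum. We also need $\log q_\theta$ and $\log p_T$ to be finite where they appear. For a softmax-parameterized student, $q_\theta(a\mid s_t)>0$ everywhere, which covers $q_\theta$. On the $p_T$ side, terms with $p_T(a\mid s_t)=0$ contribute $0$ to the forward KL by the convention $0\log 0=0$; for the reverse KL we will assume $p_T(\cdot\mid s_t)>0$, which holds for a softmax teacher. Throughout we use the log-derivative identity $\nabla_\theta q_\theta(a\mid s_t)=q_\theta(a\mid s_t)\nabla_\theta\log q_\theta(a\mid s_t)$.

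For (i), write
\[
\mathrm{KL}(p_T\|q_\theta)=\sum_a p_T(a\mid s_t)\log p_T(a\mid s_t)-\sum_a p_T(a\mid s_t)\log q_\theta(a\mid s_t).
\]
The first sum is constant in $\theta$ by assumption, so only the cross-entropy term survives differentiation. This gives $-\sum_a p_T(a\mid s_t)\nabla_\theta\log q_\theta(a\mid s_t)$, which is exactly the claimed expectation under $p_T$. It also shows that the forward KL and the soft-target cross-entropy $H(p_T,q_\theta)$ have identical gradients. For the Monte Carlo remark, draw $y\sim p_T(\cdot\mid s_t)$ and observe that $-\nabla_\theta\log q_\theta(y\mid s_t)$ is an unbiased estimator of this gradient. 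That quantity is precisely the gradient of the SFT loss on the sampled token.

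For (ii), write
\[
\mathrm{KL}(q_\theta\|p_T)=\sum_a q_\theta(a)\bigl(\log q_\theta(a)-\log p_T(a)\bigr),
\]
suppressing $s_t$. The product rule gives two groups of terms:
\[
\sum_a \nabla_\theta q_\theta(a)\bigl(\log q_\theta(a)-\log p_T(a)\bigr)+\sum_a q_\theta(a)\nabla_\theta\log q_\theta(a).
\]
This second sum is the main step, and we dispose of it as follows. It equals $\sum_a\nabla_\theta q_\theta(a)=\nabla_\theta\sum_a q_\theta(a)=\nabla_\theta 1=0$, which is the familiar zero-mean property of the score function. Applying the log-derivative identity to the first group turns it into $\mathbb{E}_{y\sim q_\theta}\bigl[(\log q_\theta(y)-\log p_T(y))\nabla_\theta\log q_\theta(y)\bigr]$, which is the stated formula.

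For the interpretation, define $r=\log p_T-\log q_\theta$ and treat it as a stop-gradient scalar. Then the negative gradient, i.e. the descent direction, is $\mathbb{E}_{y\sim q_\theta}[r(s_t,y)\nabla_\theta\log q_\theta(y\mid s_t)]$. This is exactly the REINFORCE gradient of $\mathbb{E}_{y\sim q_\theta}[r(s_t,y)]$, so minimizing the reverse KL is policy-gradient ascent on this dense reward.

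Stopping the gradient through $r$ loses nothing. The extra term it would contribute is $\mathbb{E}_{q_\theta}[\nabla_\theta r]=-\mathbb{E}_{q_\theta}[\nabla_\theta\log q_\theta]$, which vanishes by the same zero-mean identity. So the stop-gradient surrogate and the true objective have the same gradient.
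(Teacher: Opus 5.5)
Your proof is correct and follows essentially the same route as the paper. In both, you differentiate the finite vocabulary sum term by term and drop the $\theta$-independent teacher entropy in (i); in (ii) you apply the product rule, eliminate the extra term via $\sum_a q_\theta\nabla_\theta\log q_\theta=\nabla_\theta\sum_a q_\theta=0$, and finish with the log-derivative identity. Your explicit positivity assumptions, and your closing check that $\mathbb{E}_{q_\theta}[\nabla_\theta r]=0$ (so the stop-gradient surrogate has exactly the gradient of $-\mathrm{KL}$), are correct refinements that the paper leaves implicit.
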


The proof is provided in Appendix~\ref{app:proof-sl-rl}.
Combining the prefix-source axis, which determines the policy regime, with the
KL-direction axis, which determines the gradient form of the token-level
objective, gives a learning-theoretic interpretation of all four decoupled
objectives. Teacher-prefix forward KL is off-policy SFT \citep{deepseek-r1}, or soft-label
distillation, on teacher traces. Student-prefix forward KL is DAgger-style on-policy SFT \citep{ross2011dagger}, where the student
visits states and the teacher provides SFT-style cross-entropy supervision. Teacher-prefix reverse KL is an offline-RL-style
distillation objective \citep{levine2020offlinerl}, applying the log-ratio reward on a fixed
teacher-induced prefix distribution. Student-prefix reverse KL yields OPD \citep{gu2023minillm},
viewed as dense-reward on-policy RL induced by the teacher--student log-ratio.

After defining the four decoupled objectives and their learning-theoretic
interpretations, we evaluate how prefix source and KL direction affect reasoning
distillation in two regimes. In \textbf{standalone distillation}, all objectives
use matched training steps, optimizer, and hyperparameters, and we compare the
quality of the resulting students. In a \textbf{distillation-then-RL} pipeline,
each distilled checkpoint initializes a subsequent RL stage, testing whether
strong standalone distillation also yields an effective starting point for
further policy optimization.

\section{Experimental Setup}
\label{sec:setup_exp}

\paragraph{Models.}
We distill Qwen3-4B and Qwen3-8B into Qwen3-0.6B-Base~\citep{qwen3}, using the
same model family to avoid cross-tokenizer artifacts in token-level KL
computation.


\paragraph{Training.}
For \textbf{standalone distillation}, we study two training lengths: a short
setting with $128$ tokens and a long setting with $4096$ tokens. All objectives
use bf16 training, learning rate $5\times10^{-7}$, batch size $32$, and $1000$
training steps. For the \textbf{RL follow-up}, we use Group Relative Policy
Optimization (GRPO)~\citep{shao2024grpo} with an accuracy-based outcome reward.
We use group size $8$, batch size $32$, rollout temperature $1.0$,
top-$p=0.95$, maximum decoding length $4096$, and $1000$ steps. We report
training dynamics averaged over three runs.

\paragraph{Data and evaluation.}
We train on DeepScaleR~\citep{deepscaler2025} and
evaluate on AIME24, AMC23, MATH500, and GSM8K~\citep{aime24,AMC2023,math500,
gsm8k}. Math evaluation uses temperature $0.6$, top-$p=0.95$, and a maximum
generation length of $8192$ tokens. We report Avg@$N$, Pass@$N$, and average
response length, with $N=5$ for AIME24/AMC23 and $N=3$ for MATH500/GSM8K.

\paragraph{Training dynamics.}
We track task accuracy,
mean per-token predictive entropy, and average response length as diagnostics of
performance, exploration capacity, and generation behavior.

\paragraph{Fused full-vocabulary KL kernel.}
To make exact full-vocabulary KL feasible, we implement a custom fused kernel
that avoids materializing vocabulary-sized intermediates at each token
position.\footnote{
The kernel fuses projection, softmax, and KL accumulation in a streaming
vocabulary pass, preserving the exact objective while reducing per-token memory
from $O(|\mathcal{V}|)$ intermediates to $O(1)$ running statistics. Details are
in Appendix~\ref{app:fused-kl}.}
\begin{figure}[!t]
\centering
\begin{minipage}{0.32\textwidth}
  \centering
  \includegraphics[width=\textwidth]{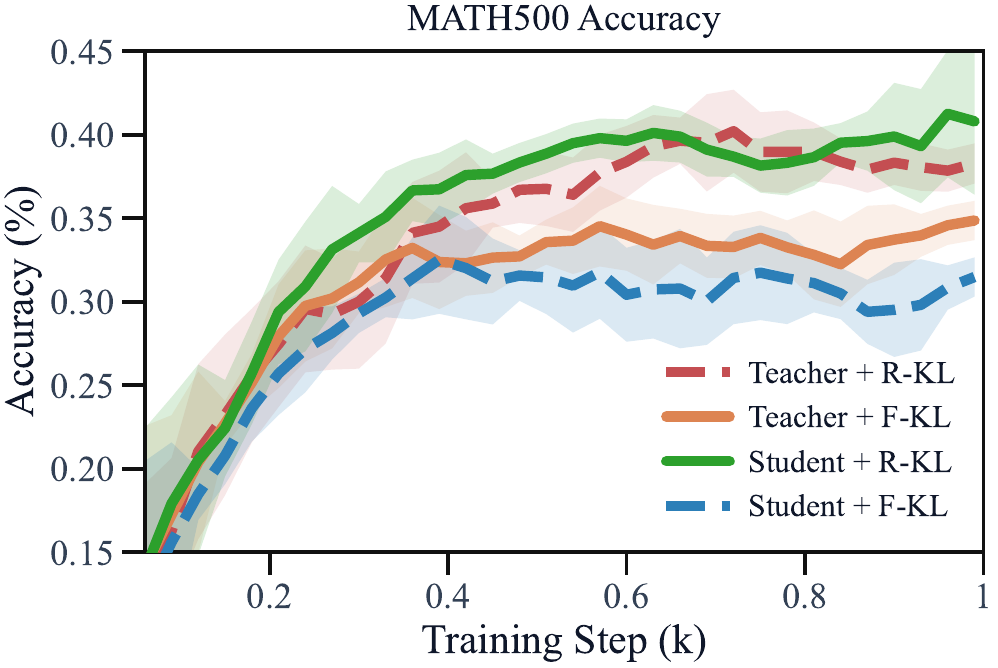}
\end{minipage}
\hfill
\begin{minipage}{0.32\textwidth}
  \centering
  \includegraphics[width=\textwidth]{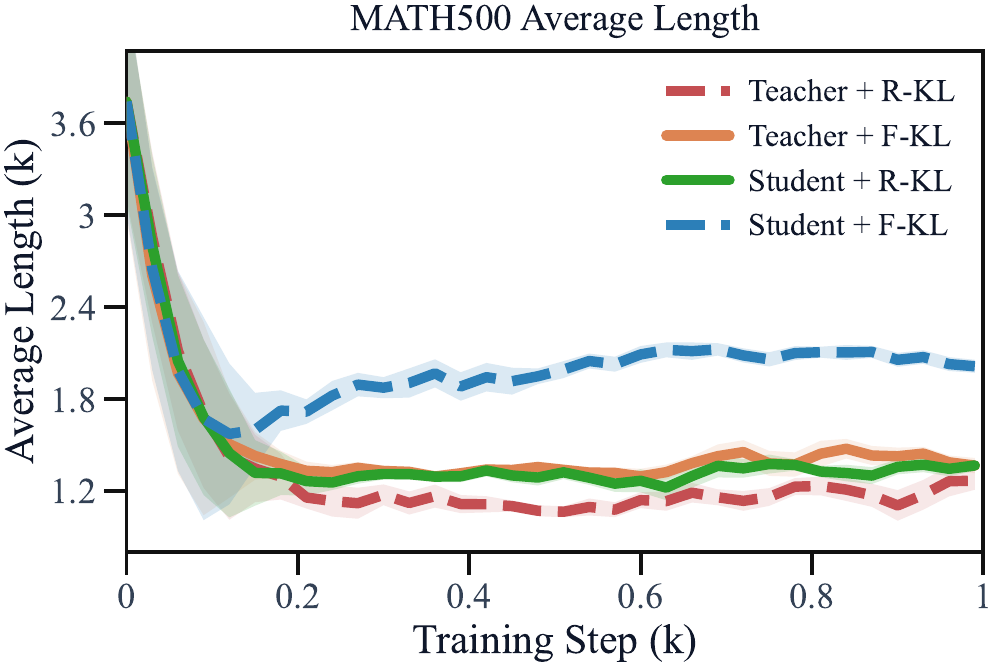}
\end{minipage}
\hfill
\begin{minipage}{0.32\textwidth}
  \centering
  \includegraphics[width=\textwidth]{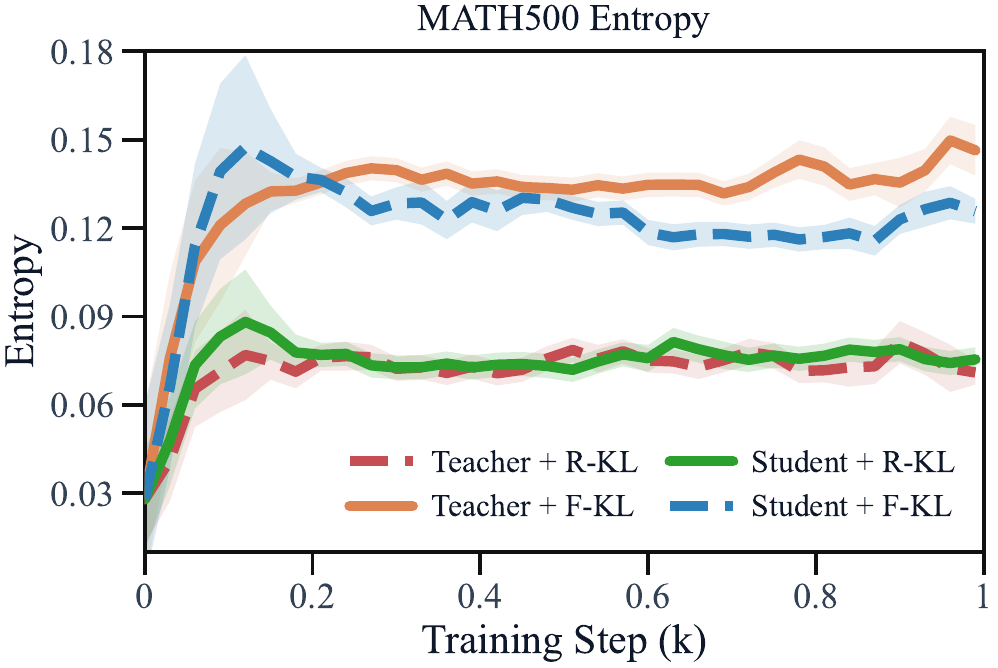}
\end{minipage}


\begin{minipage}{0.32\textwidth}
  \centering
  \includegraphics[width=\textwidth]{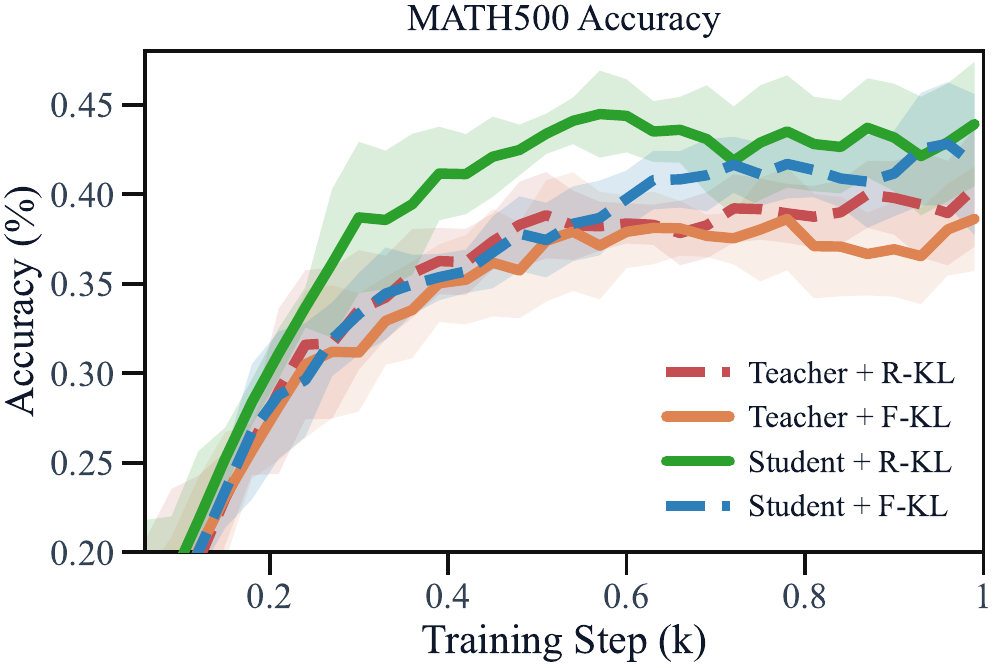}
\end{minipage}
\hfill
\begin{minipage}{0.32\textwidth}
  \centering
  \includegraphics[width=\textwidth]{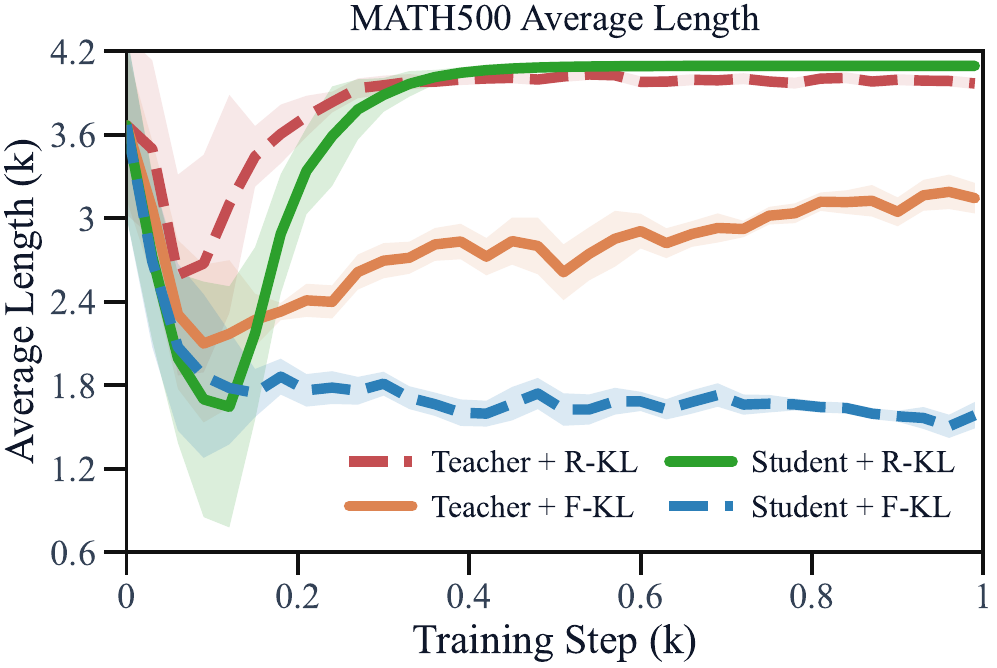}
\end{minipage}
\hfill
\begin{minipage}{0.32\textwidth}
  \centering
  \includegraphics[width=\textwidth]{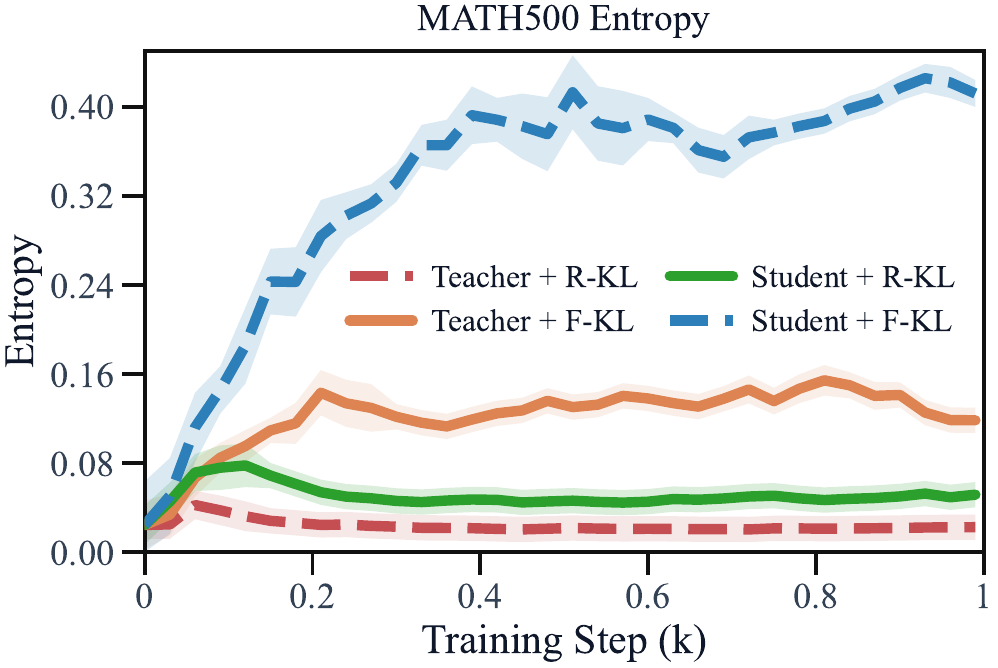}
\end{minipage}
\vspace{-4pt}
\caption{Distillation training dynamics with Qwen3-4B as teacher and
Qwen3-0.6B as student. Top/bottom rows: 128/4096-token training; left-to-right
columns: accuracy, length, entropy.}
\label{fig:math500_dynamics}
\end{figure}

\begin{figure}[!t]
\centering
\begin{minipage}{0.32\textwidth}
  \centering
  \includegraphics[width=\textwidth]{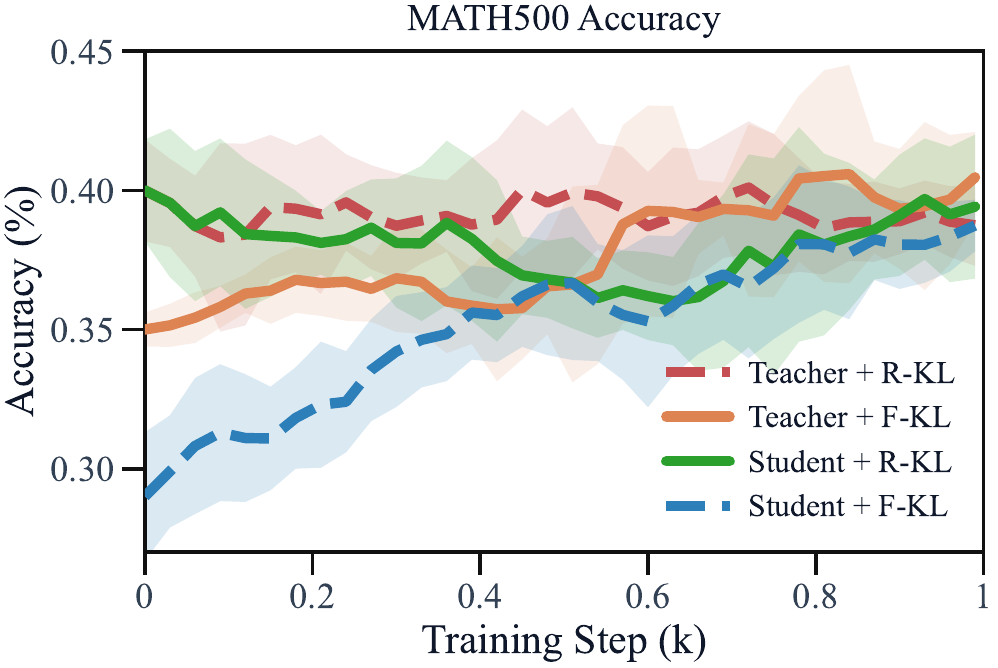}
\end{minipage}
\hfill
\begin{minipage}{0.32\textwidth}
  \centering
  \includegraphics[width=\textwidth]{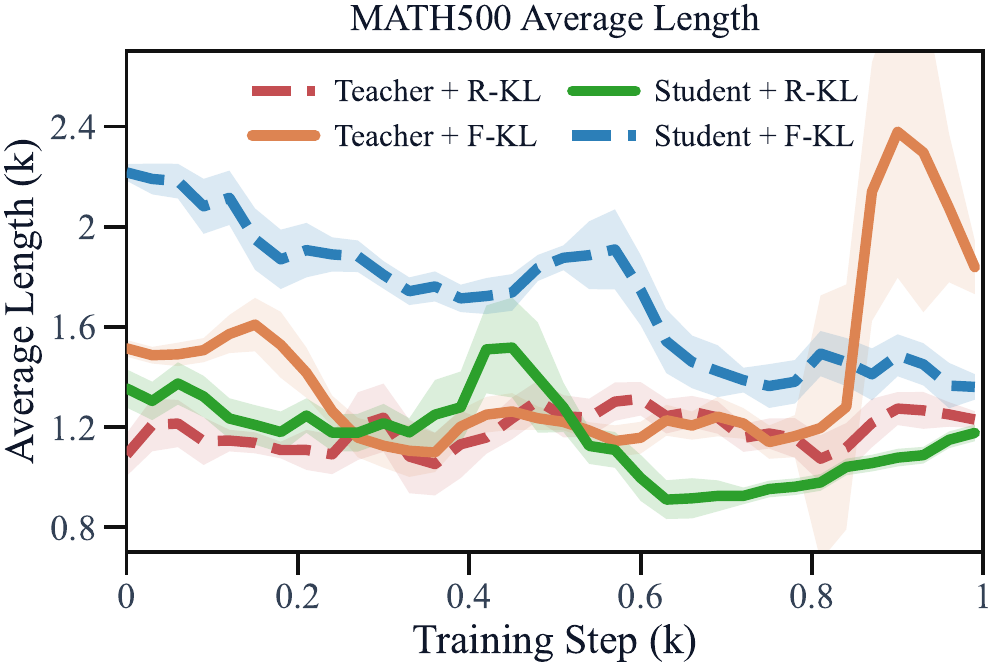}
\end{minipage}
\hfill
\begin{minipage}{0.32\textwidth}
  \centering
  \includegraphics[width=\textwidth]{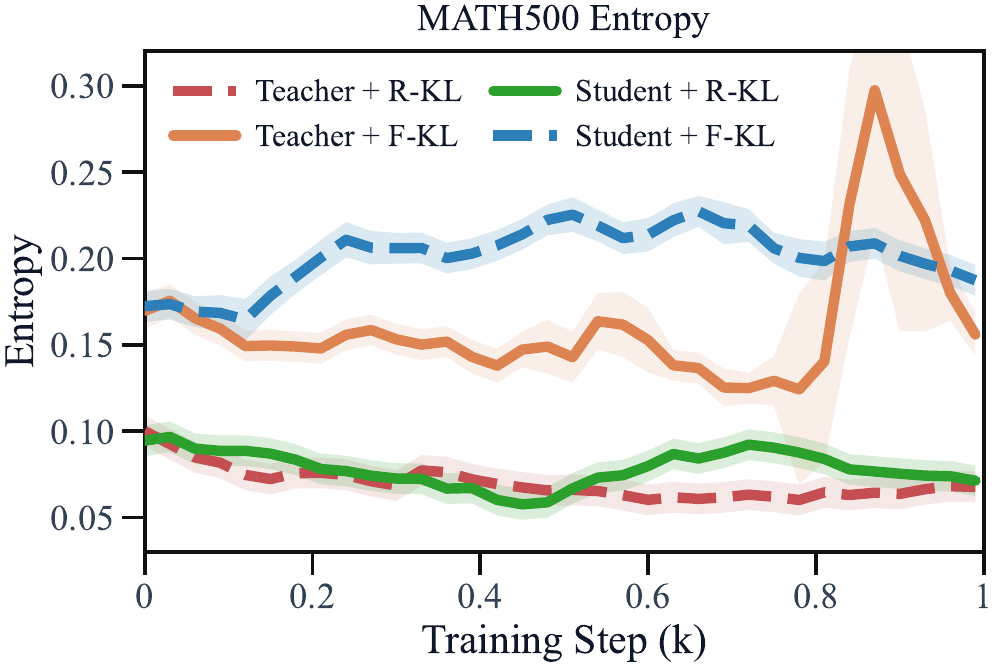}
\end{minipage}


\begin{minipage}{0.32\textwidth}
  \centering
  \includegraphics[width=\textwidth]{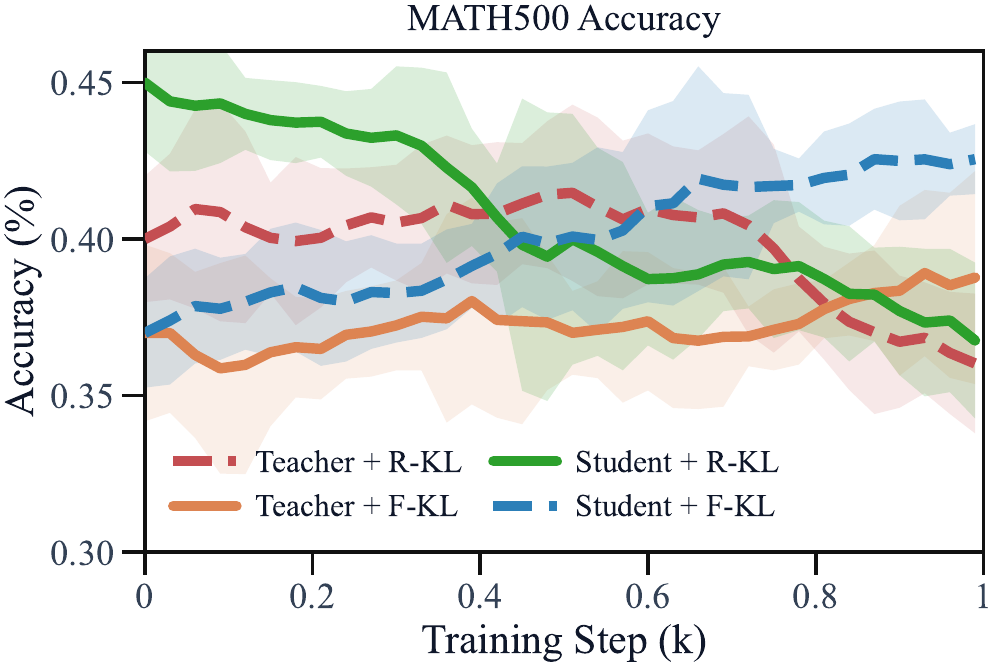}
\end{minipage}
\hfill
\begin{minipage}{0.32\textwidth}
  \centering
  \includegraphics[width=\textwidth]{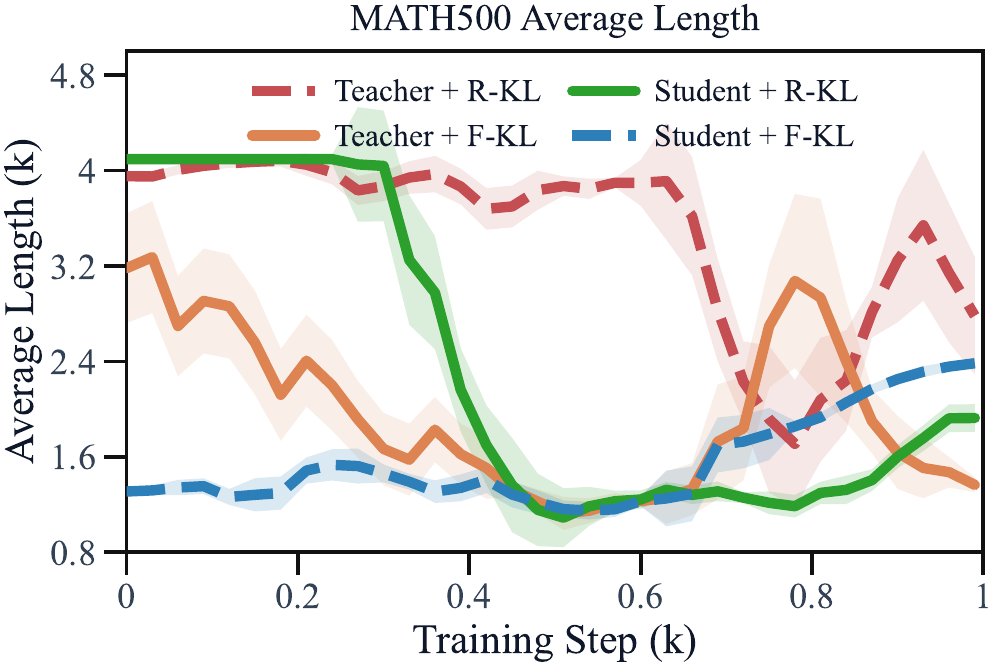}
\end{minipage}
\hfill
\begin{minipage}{0.32\textwidth}
  \centering
  \includegraphics[width=\textwidth]{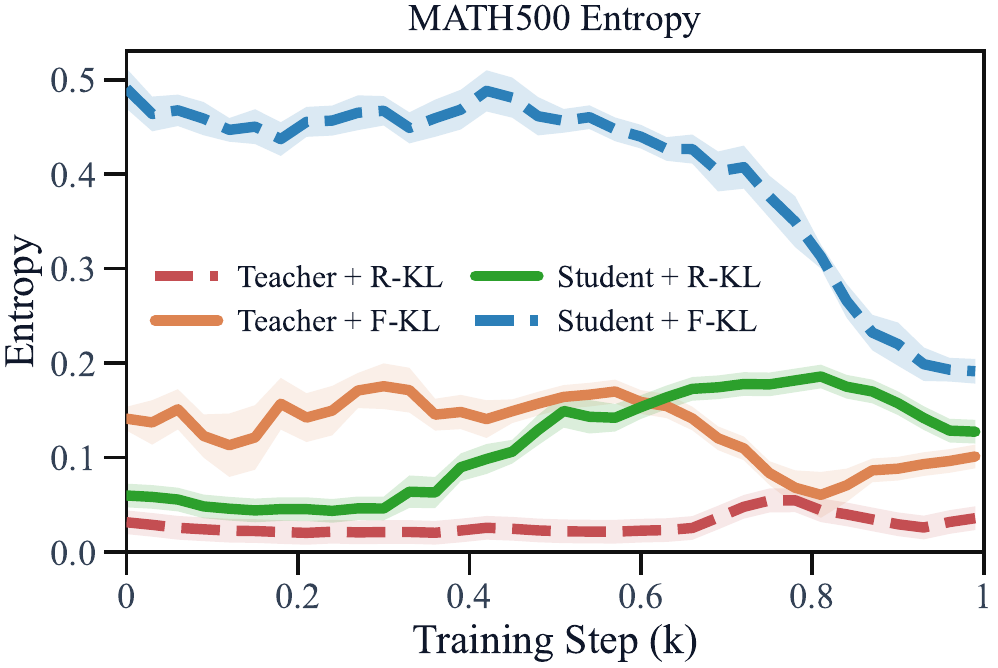}
\end{minipage}

\caption{RL training dynamics after Qwen3-4B-teacher distillation warmup. 
Top/bottom rows: 128/4096-token warmups; left-to-right columns: accuracy, length, entropy.}
\label{fig:math500_grpo}
\end{figure}

\section{Empirical Analysis of the Four Decoupled Objectives}
\label{sec:results}

We analyze the four decoupled objectives along three dimensions: KL direction,
prefix source, and training length. For each dimension, we evaluate the
objectives both as standalone distillation methods and as initializations for
subsequent RL.

\subsection{KL Direction: Forward versus Reverse KL}
\label{sec:results-kl}


\paragraph{Standalone distillation.}
Figures~\ref{fig:math500_dynamics} and~\ref{fig:math500_dynamics_8b} show that
reverse KL consistently outperforms forward KL throughout standalone
distillation across prefix sources, sequence lengths, and teacher scales. This
accuracy advantage comes with a cost: in the 4096-token setting, reverse KL
drives mean per-token predictive entropy close to collapse and often pushes
response lengths toward the evaluation-time generation limit. This matches the
mode-seeking geometry of reverse KL~\citep{gu2023minillm,luo2026demystify},
which concentrates probability mass on a narrower set of high-probability
teacher continuations.
Endpoint evaluations in Tables~\ref{tab:math_eval_128_full} and
\ref{tab:math_eval_4096_full} show the same trend across math benchmarks. Under
matched benchmark, teacher, prefix source, and sequence length, reverse KL
improves Avg@$k$ by $+2.45$ points on average, with larger gains in the
128-token setting ($+3.68$) and smaller but positive gains in the 4096-token
setting ($+1.21$). For example, on MATH500 with student prefixes and 128-token
training, reverse KL raises Avg@$k$ from $34.31\%$ to $42.65\%$ with the
Qwen3-4B teacher and from $34.43\%$ to $43.23\%$ with Qwen3-8B. However, higher
Avg@$k$ does not consistently yield higher Pass@$k$: reverse KL nearly matches
Pass@$k$ in the 128-token setting and falls below forward KL on average in the
4096-token setting. Thus, \textbf{reverse KL achieves stronger Avg@$k$, but the resulting sharper
student distribution reduces diversity, as reflected by matched or lower
Pass@$k$, and destabilizes training under long-sequence distillation.}

\begin{figure}[!t]
\centering
\begin{minipage}{0.32\textwidth}
  \centering
  \includegraphics[width=\textwidth]{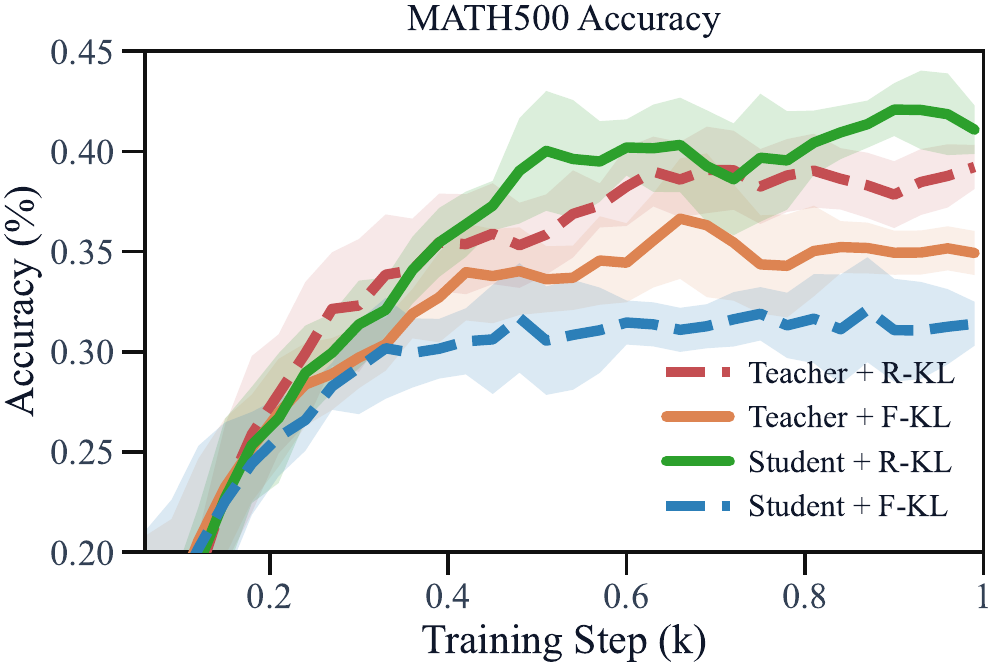}
\end{minipage}
\hfill
\begin{minipage}{0.32\textwidth}
  \centering
  \includegraphics[width=\textwidth]{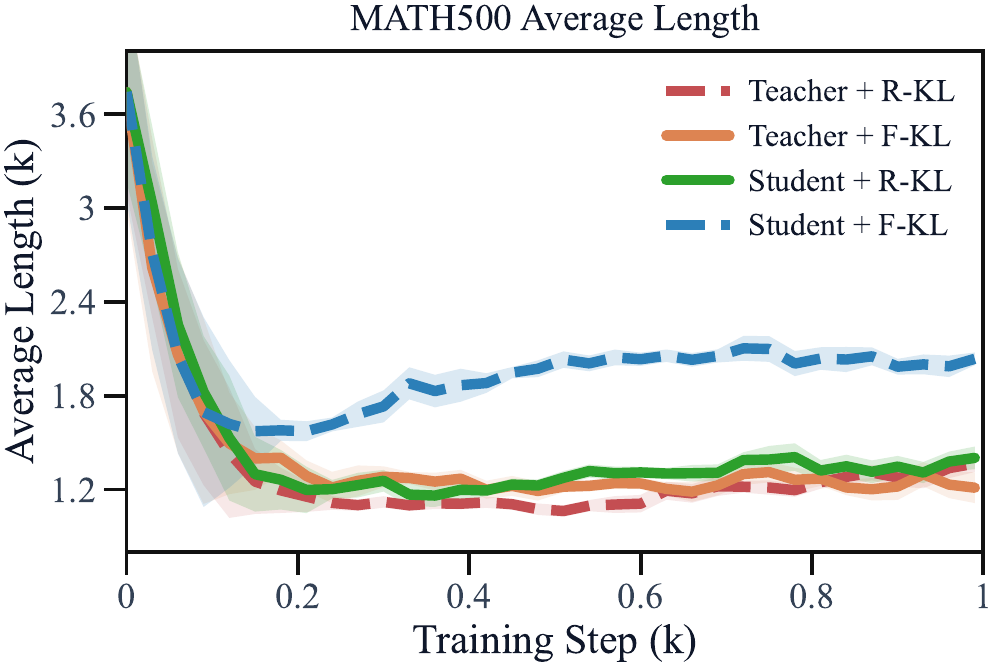}
\end{minipage}
\hfill
\begin{minipage}{0.32\textwidth}
  \centering
  \includegraphics[width=\textwidth]{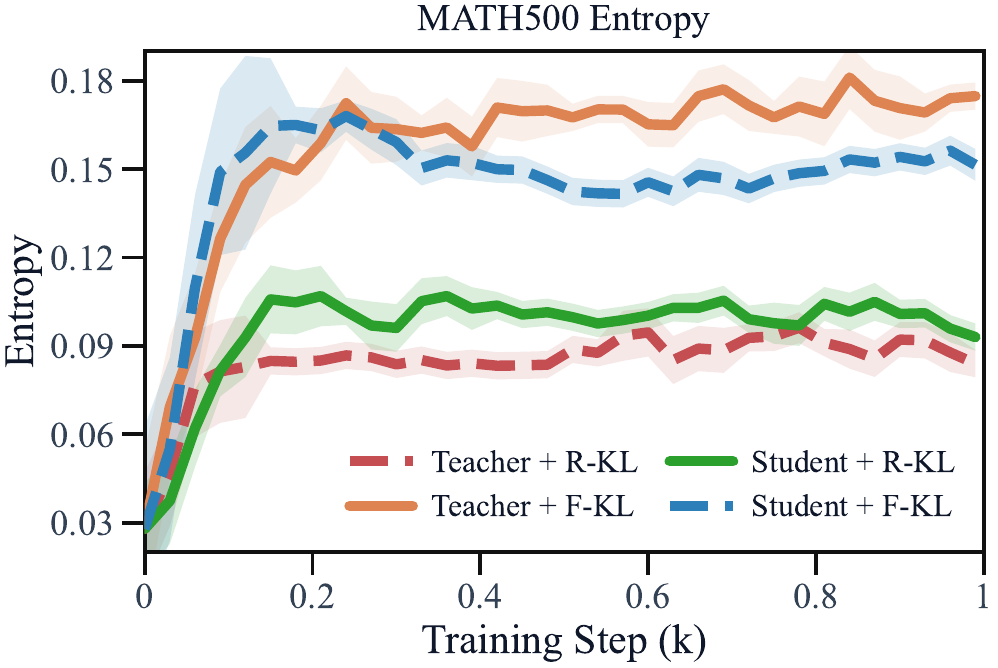}
\end{minipage}


\begin{minipage}{0.32\textwidth}
  \centering
  \includegraphics[width=\textwidth]{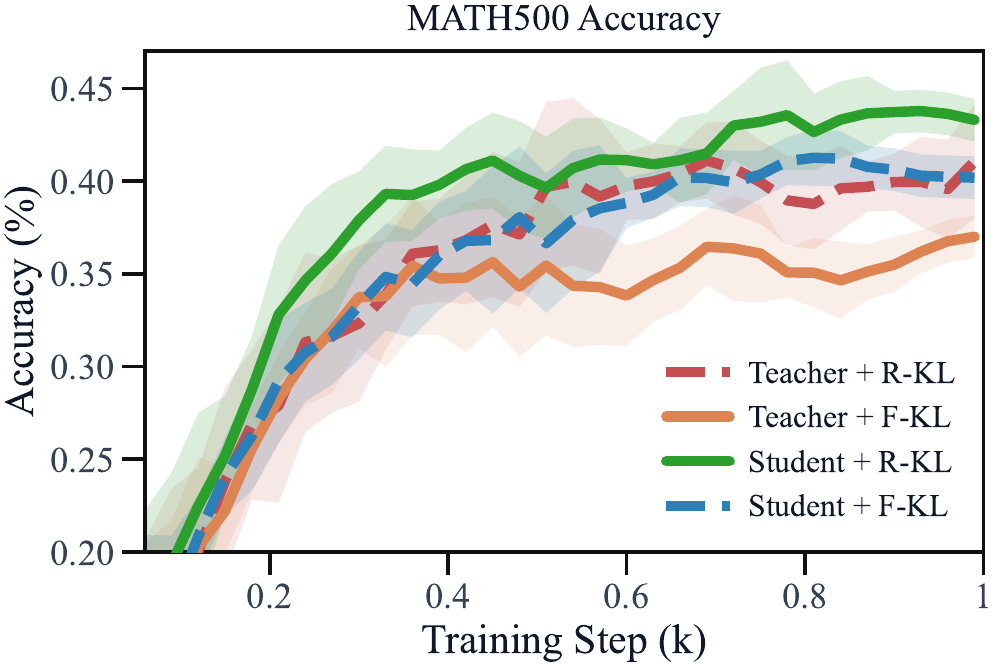}
\end{minipage}
\hfill
\begin{minipage}{0.32\textwidth}
  \centering
  \includegraphics[width=\textwidth]{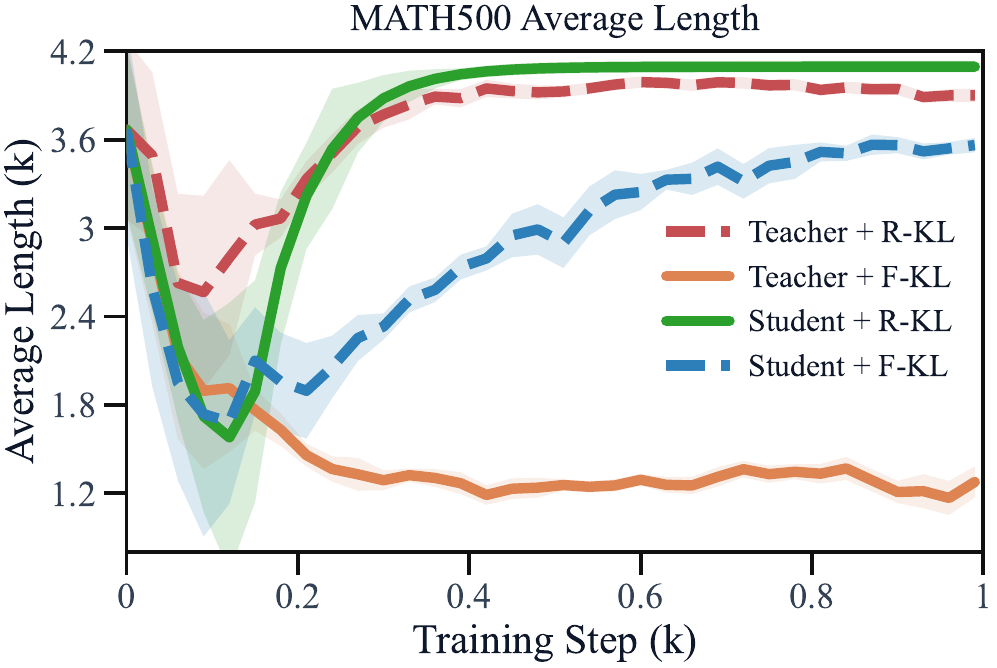}
\end{minipage}
\hfill
\begin{minipage}{0.32\textwidth}
  \centering
  \includegraphics[width=\textwidth]{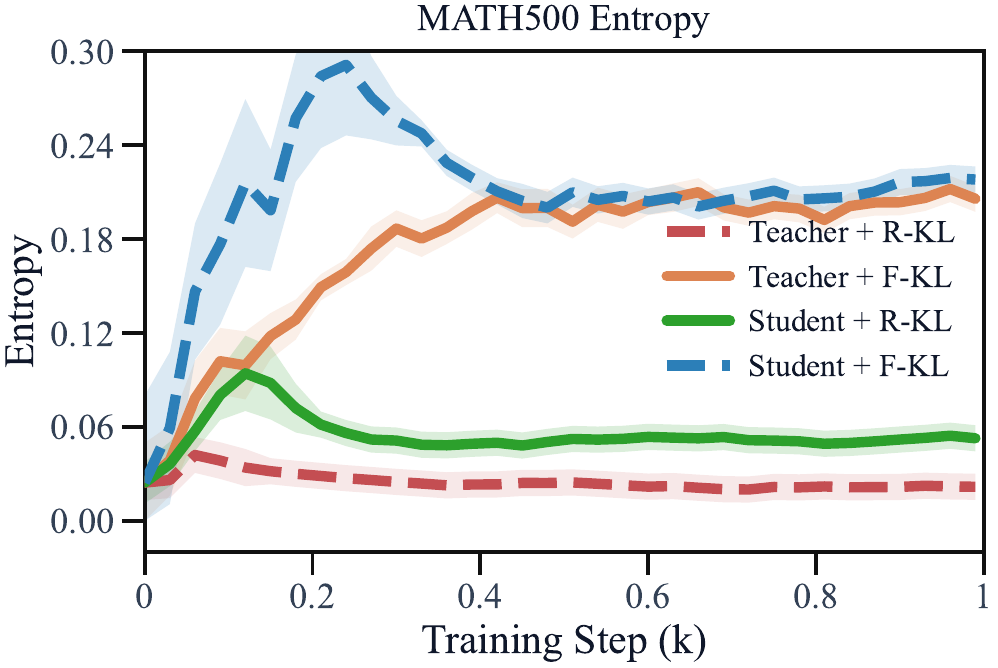}
\end{minipage}

\caption{Distillation training dynamics with Qwen3-8B as teacher and
Qwen3-0.6B as student. Top/bottom rows: 128/4096-token training;
left-to-right columns: accuracy, length, entropy.}
\label{fig:math500_dynamics_8b}
\end{figure}

\begin{figure}[!t]
\centering
\begin{minipage}{0.32\textwidth}
  \centering
  \includegraphics[width=\textwidth]{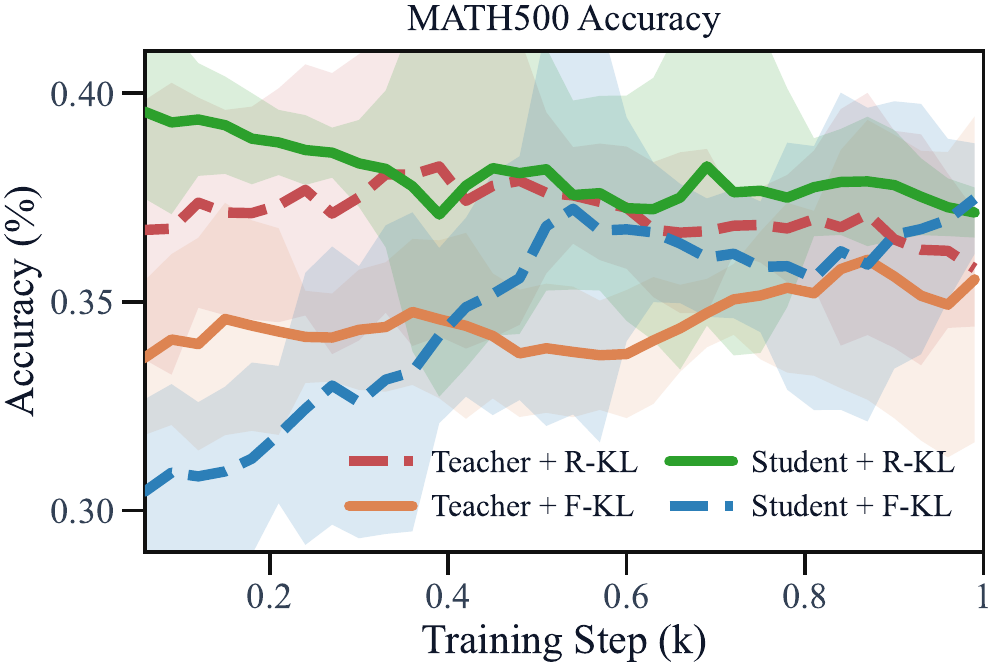}
\end{minipage}
\hfill
\begin{minipage}{0.32\textwidth}
  \centering
  \includegraphics[width=\textwidth]{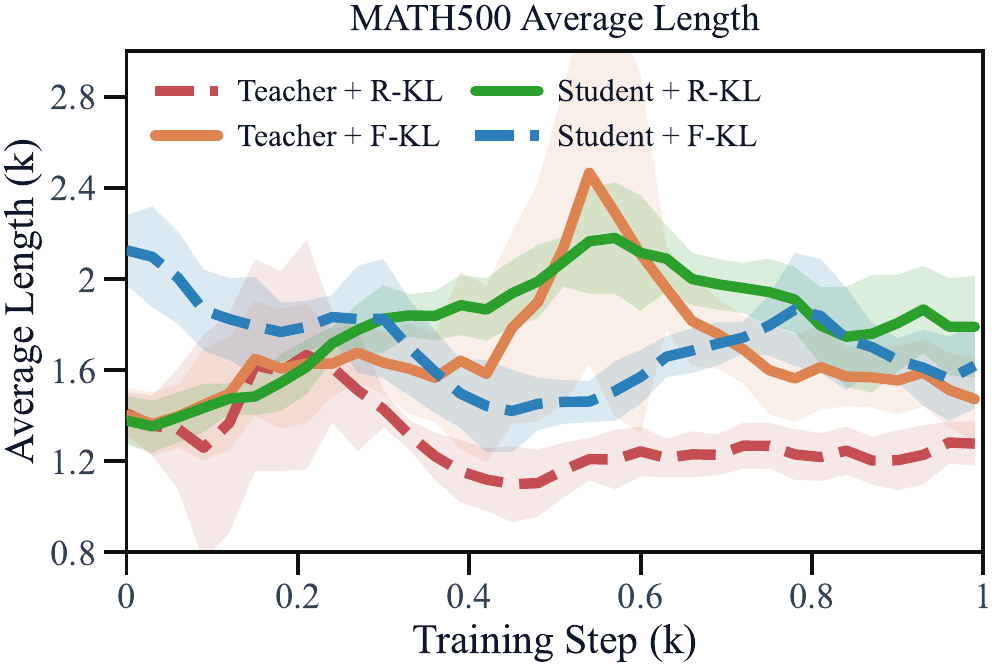}
\end{minipage}
\hfill
\begin{minipage}{0.32\textwidth}
  \centering
  \includegraphics[width=\textwidth]{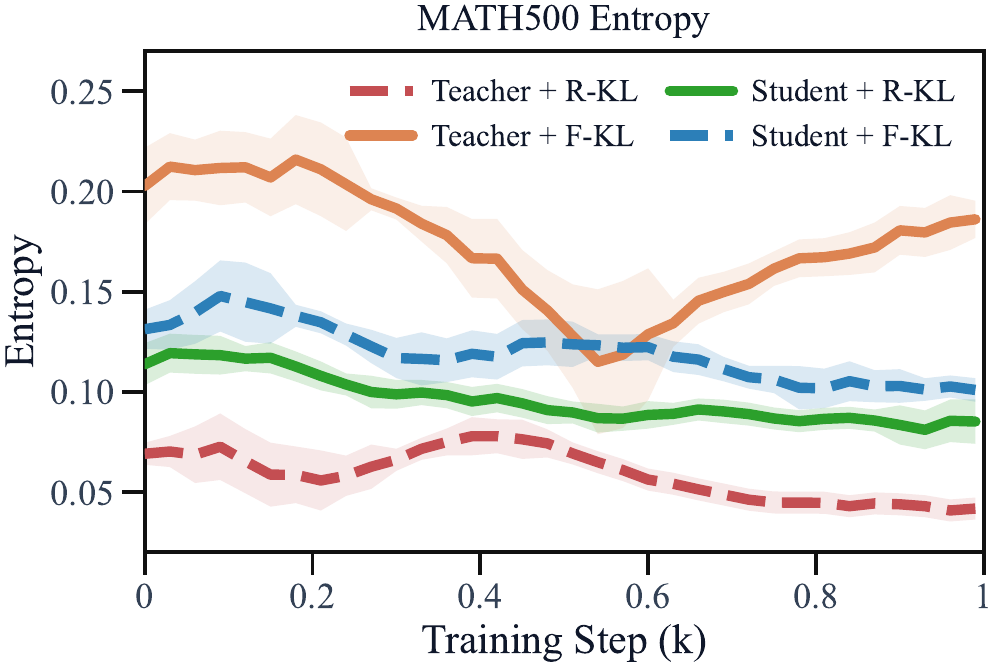}
\end{minipage}


\begin{minipage}{0.32\textwidth}
  \centering
  \includegraphics[width=\textwidth]{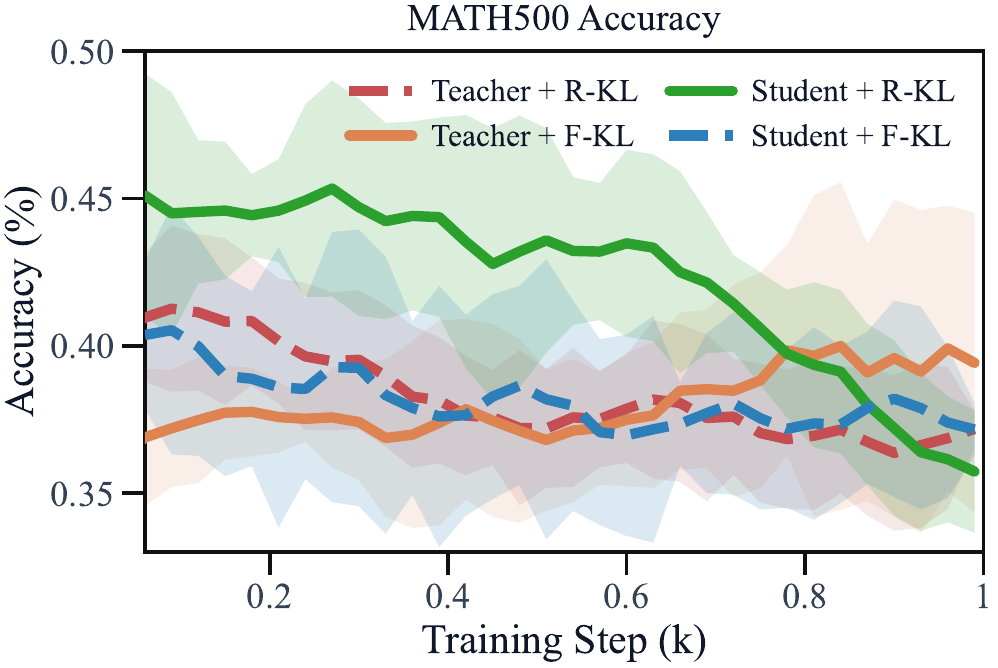}
\end{minipage}
\hfill
\begin{minipage}{0.32\textwidth}
  \centering
  \includegraphics[width=\textwidth]{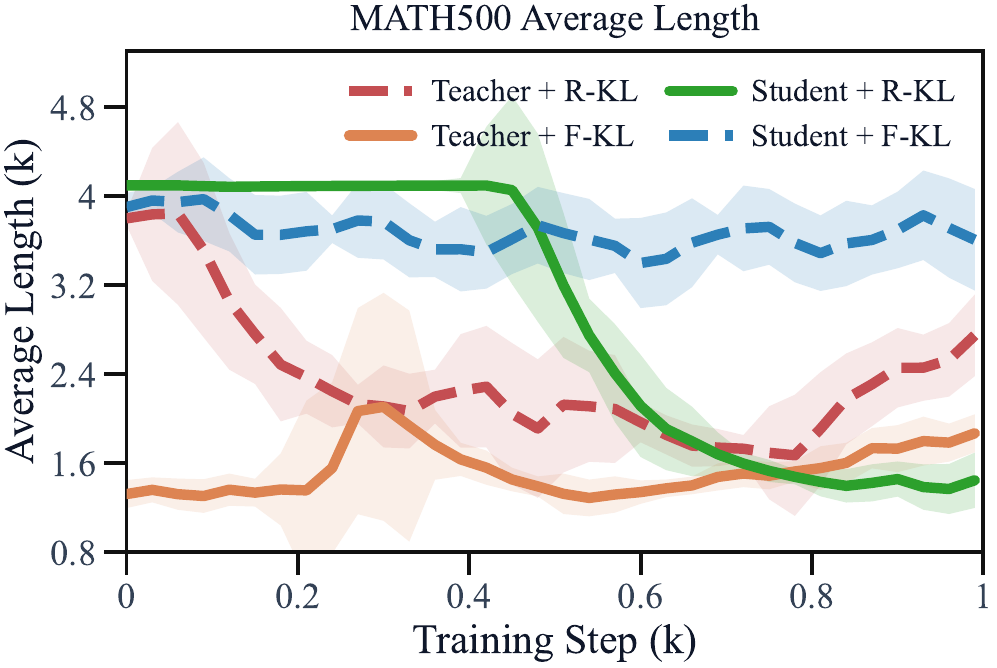}
\end{minipage}
\hfill
\begin{minipage}{0.32\textwidth}
  \centering
  \includegraphics[width=\textwidth]{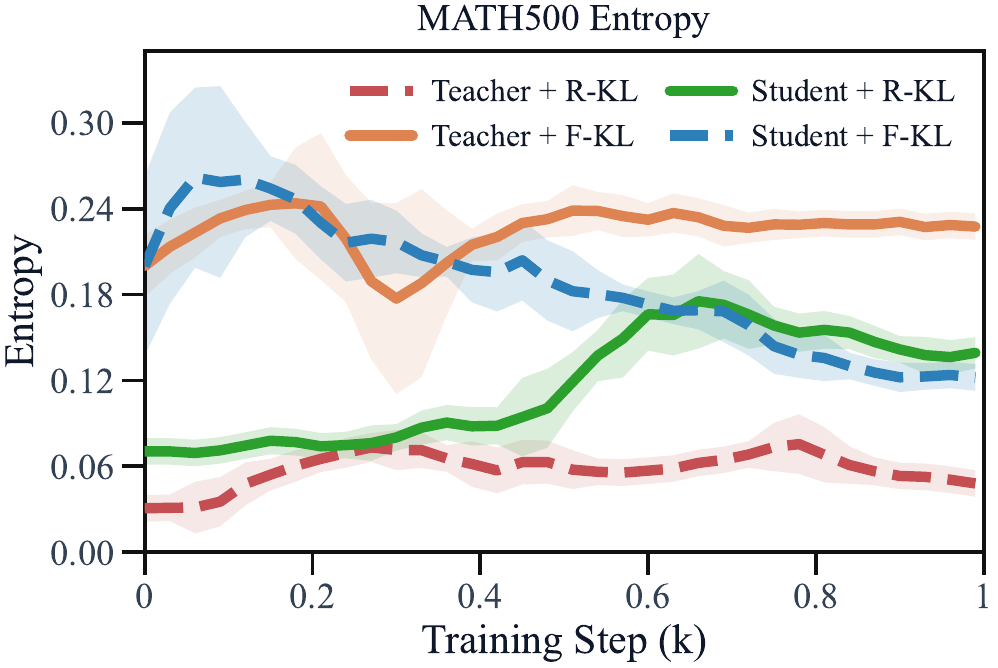}
\end{minipage}

\caption{RL dynamics after Qwen3-8B-teacher warmup. Top/bottom rows:
128/4096-token warmups; left-to-right columns: accuracy, length, entropy.}
\label{fig:math500_grpo_8b}
\end{figure}

\paragraph{RL follow-up.}
The standalone advantage of reverse KL does not reliably transfer to the
subsequent RL stage. Figures~\ref{fig:math500_grpo} and
\ref{fig:math500_grpo_8b} show that reverse-KL warm starts enter GRPO with much
lower predictive entropy than forward-KL warm starts and are more likely to
plateau or degrade. For example, after 4096-token distillation with the
Qwen3-4B teacher, student-prefix reverse KL starts from the strongest checkpoint
at roughly $45\%$ MATH500 accuracy but drops to about $36\%$ during GRPO,
whereas student-prefix forward KL starts lower, around $40\%$, and improves to
about $45\%$. Thus, forward KL closes the initial gap while retaining higher
entropy than reverse KL had at the start of RL. The Qwen3-8B teacher shows the
same pattern in the 128-token setting: student-prefix reverse KL starts higher
but ends near $36\%$, while student-prefix forward KL improves from roughly
$31\%$ to about $36\%$ with consistently higher entropy.
Thus, \textbf{
reverse KL can produce a stronger pre-RL model, but its reduced entropy
constrains exploration and can lead to accuracy degradation, whereas forward KL
is a more reliable initialization for continued policy optimization.}

\begin{figure}[!t]
\centering
\begin{minipage}[t]{0.24\textwidth}
  \vspace{0pt}
  \centering
  \includegraphics[width=\textwidth]{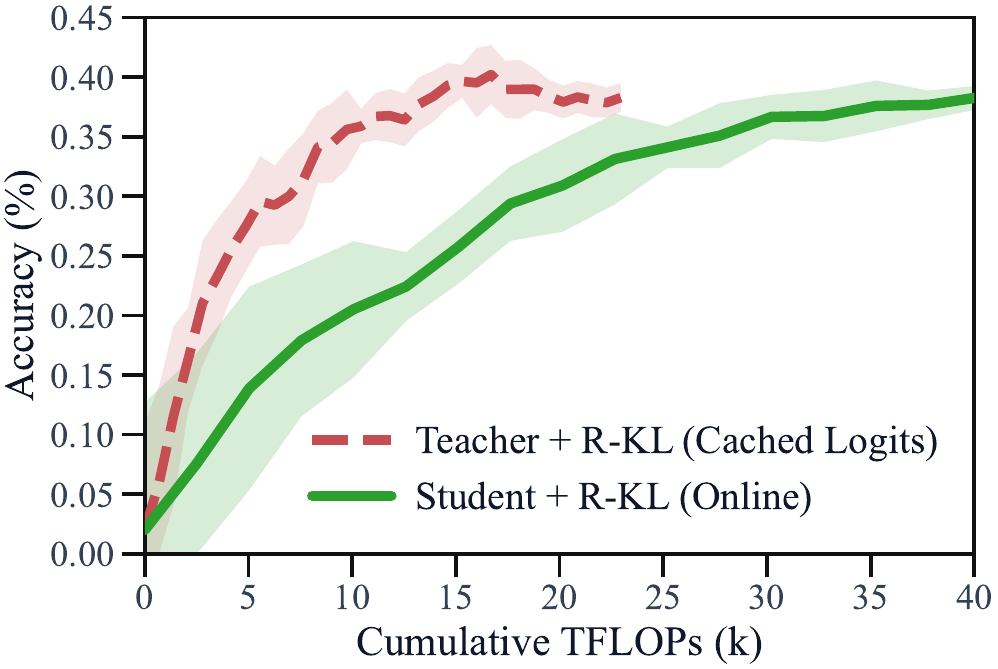}
  {\captionsetup{font=footnotesize}
  \caption{Matched-FLOPs comparison of prefix sources.}
  \label{fig:reverse_flops_128}}
\end{minipage}
\hfill
\begin{minipage}[t]{0.74\textwidth}
  \vspace{0pt}
  \centering
  \begin{minipage}{0.32\textwidth}
    \centering
    \includegraphics[width=\textwidth]{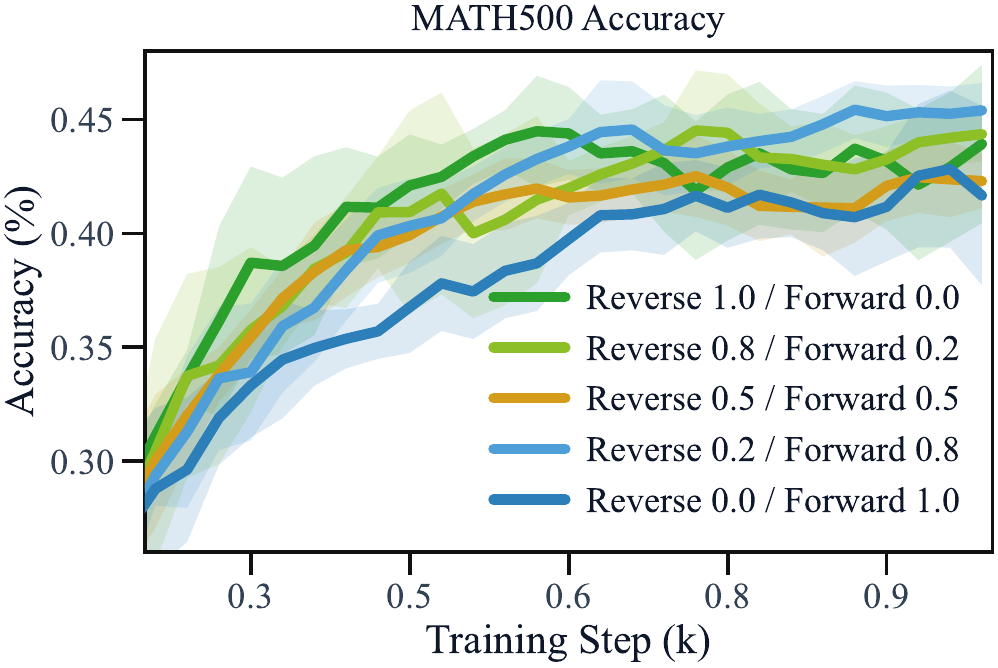}
  \end{minipage}
  \hfill
  \begin{minipage}{0.32\textwidth}
    \centering
    \includegraphics[width=\textwidth]{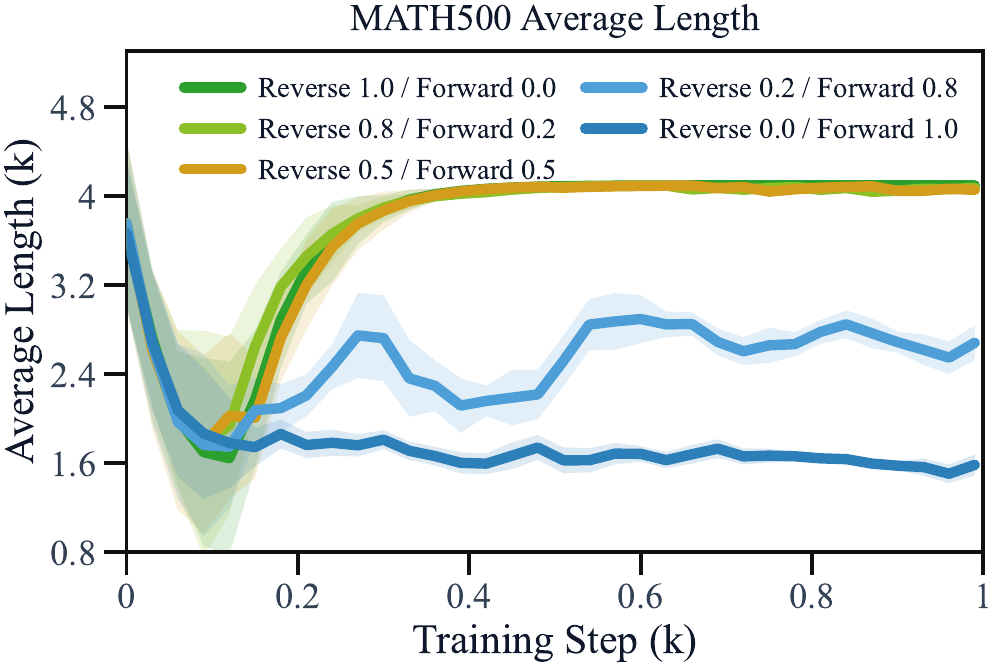}
  \end{minipage}
  \hfill
  \begin{minipage}{0.32\textwidth}
    \centering
    \includegraphics[width=\textwidth]{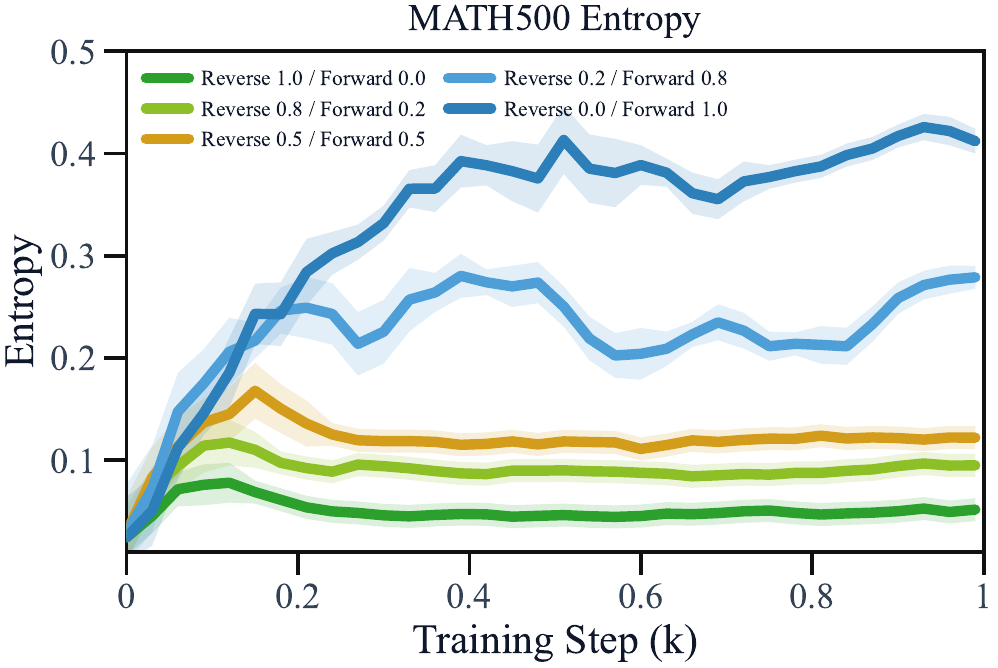}
  \end{minipage}
\caption{Training dynamics of KL mixing on MATH500. Columns, left to right:
accuracy, response length, and entropy.}
  \label{fig:method_forward_plus_reverse}
\end{minipage}
\end{figure}

\subsection{Prefix Source: Teacher versus Student Prefixes}
\label{sec:results-prefix}

\paragraph{Standalone distillation.}
Figures~\ref{fig:math500_dynamics} and~\ref{fig:math500_dynamics_8b}
show stronger accuracy dynamics for student than teacher prefixes, especially
under long-sequence training. This supports the
on-policy motivation: supervision is applied to the states the student actually
visits, rather than only to teacher-induced states. Unlike KL direction, however,
prefix source has a much smaller effect on entropy and length dynamics, which
are largely governed by the KL directions. Endpoint evaluations in
Tables~\ref{tab:math_eval_128_full} and~\ref{tab:math_eval_4096_full} show the
same trend. Across matched comparisons differing only in prefix source, student
prefixes improve Avg@$k$ by $+1.80$ points and Pass@$k$ by $+2.11$ points on
average; after 4096-token distillation, the gains increase to $+3.55$ Avg@$k$
and $+2.95$ Pass@$k$.
Thus, \textbf{Student prefixes improve distillation accuracy, whereas entropy and
length dynamics are largely governed by the KL direction.}

\paragraph{RL follow-up.}
In the RL follow-up, the effect of prefix source remains visible but is less
dominant than the effect of KL direction. Under a fixed KL direction,
student-prefix warm starts are often competitive with or stronger than
teacher-prefix warm starts in accuracy
(Figures~\ref{fig:math500_grpo} and~\ref{fig:math500_grpo_8b}). However, their
entropy and length dynamics are still largely governed by the KL objective:
reverse-KL warm starts remain low-entropy and can plateau or degrade, whereas
forward-KL warm starts preserve higher entropy and are generally more trainable.
Thus, \textbf{student prefixes improve the quality of the distilled
initialization, but KL direction largely determines downstream RL trainability.}

\paragraph{Compute tradeoff.}
Student prefixes improve quality but require online generation. Teacher-prefix
training can instead reuse fixed rollouts and cached teacher logits.
Figure~\ref{fig:reverse_flops_128} shows this tradeoff in the 128-token
reverse-KL setting with the Qwen3-4B teacher: with cached logits, teacher
prefixes reach roughly $38$--$40\%$ MATH500 accuracy within $15$--$20$k
cumulative TFLOPs, while student prefixes require substantially more compute to
reach the same range. Appendix~\ref{app:flops-accounting} details the FLOPs
accounting. \textbf{Thus, student prefixes improve quality under matched training steps,
whereas teacher prefixes can be more compute-efficient under matched FLOPs.}

\subsection{Training Length: Short versus Long Distillation}
\label{sec:results-length}


\paragraph{Standalone distillation.}
Overall, longer distillation improves mathematical reasoning: across
Tables~\ref{tab:math_eval_128_full} and~\ref{tab:math_eval_4096_full}, moving
from 128 to 4096 tokens increases Avg@$k$ by $2.56$ percentage points and
Pass@$k$ by $1.88$ percentage points on average. The gains vary along both
design axes. Along the KL-direction axis, forward KL benefits more, gaining
$3.80$ percentage points in Avg@$k$ versus $1.32$ for reverse KL. Along the
prefix-source axis, student prefixes become more beneficial at longer horizons:
student and teacher prefixes are nearly tied in Avg@$k$ after 128-token
distillation, but after 4096-token distillation student prefixes outperform
teacher prefixes by $3.55$ Avg@$k$ and $2.95$ Pass@$k$ points under matched
teacher scale and KL direction. Reverse KL also benefits from longer sequences,
but its gains come with worse entropy and length dynamics
(Figures~\ref{fig:math500_dynamics} and~\ref{fig:math500_dynamics_8b}):
4096-token reverse-KL runs often drive predictive entropy close to collapse and
induce severe length inflation. Thus, \textbf{longer distillation improves
performance on average. Forward KL benefits more along the KL-direction axis,
student prefixes benefit more along the prefix-source axis, and reverse KL also
gains accuracy but with entropy collapse and severe length inflation.}

\paragraph{RL follow-up.}
When distilled checkpoints initialize RL, increasing distillation length from
128 to 4096 tokens affects forward and reverse KL differently. Figures~\ref{fig:math500_grpo} and~\ref{fig:math500_grpo_8b} show that
longer-sequence warm starts often begin with higher accuracy, but the advantage
is not always preserved during RL.
Forward-KL warm starts generally retain
higher entropy and remain stable or improve, whereas 4096-token reverse-KL warm
starts enter RL with higher initial accuracy but lower entropy than their
128-token counterparts and are more prone to decline. Overall, \textbf{longer
distillation amplifies the entropy gap between KL directions: it makes
forward-KL warm starts more entropic and trainable, but reverse-KL warm starts
lower-entropy despite higher initial accuracy, leading to weaker or declining
RL trajectories.}




\begin{table}[!t]
\caption{Math evaluation after 128-token distillation. Len is the average response length.}
\label{tab:math_eval_128_full}
\centering
\scriptsize
\setlength{\tabcolsep}{2.6pt}
\renewcommand{\arraystretch}{0.7}
\resizebox{\textwidth}{!}{
\begin{tabular}{@{}lllcccccccccccc@{}}
\toprule
\multirow{2}{*}{Prefix} &
\multirow{2}{*}{Teacher} &
\multirow{2}{*}{KL} &
\multicolumn{3}{c}{GSM8K} &
\multicolumn{3}{c}{MATH500} &
\multicolumn{3}{c}{AMC23} &
\multicolumn{3}{c}{AIME24} \\
\cmidrule(lr){4-6}
\cmidrule(lr){7-9}
\cmidrule(lr){10-12}
\cmidrule(l){13-15}
& & & Avg@$k$ & Pass@$k$ & Len & Avg@$k$ & Pass@$k$ & Len & Avg@$k$ & Pass@$k$ & Len & Avg@$k$ & Pass@$k$ & Len \\
\midrule
\multirow{4}{*}{Student}
& \multirow{2}{*}{Qwen3-4B}
& Forward & 62.83 & 77.26 & 1396 & 34.31 & 55.60 & 3876 & 19.00 & 45.00 & 6173 & 0.00 & 0.00 & 7838 \\
& & Reverse & 66.06 & 79.08 & 521 & 42.65 & 55.34 & 2482 & 26.50 & 47.50 & 4464 & 2.67 & 6.67 & 7101 \\
\cmidrule(lr){2-15}
& \multirow{2}{*}{Qwen3-8B}
& Forward & 62.40 & 78.92 & 1279 & 34.43 & 55.98 & 3783 & 19.00 & 50.50 & 6244 & 1.33 & 6.67 & 7697 \\
& & Reverse & 68.03 & 82.11 & 469 & 43.23 & 55.84 & 2364 & 24.00 & 50.00 & 4369 & 2.67 & 6.67 & 6496 \\
\midrule
\multirow{4}{*}{Teacher}
& \multirow{2}{*}{Qwen3-4B}
& Forward & 63.74 & 79.08 & 1012 & 37.72 & 54.14 & 2727 & 20.00 & 45.00 & 4494 & 2.00 & 3.33 & 6607 \\
& & Reverse & 65.13 & 79.45 & 450 & 41.31 & 54.74 & 2140 & 26.00 & 45.00 & 3585 & 0.67 & 3.33 & 6432 \\
\cmidrule(lr){2-15}
& \multirow{2}{*}{Qwen3-8B}
& Forward & 62.72 & 79.30 & 522 & 38.18 & 56.16 & 2171 & 19.50 & 47.50 & 3943 & 2.00 & 6.67 & 6116 \\
& & Reverse & 65.00 & 79.45 & 403 & 43.01 & 56.36 & 2266 & 20.50 & 40.00 & 3800 & 0.67 & 3.33 & 5890 \\
\bottomrule
\end{tabular}
}
\end{table}

\begin{table}[!t]
\caption{Math evaluation after 4096-token distillation. Len is the average response length.}
\label{tab:math_eval_4096_full}
\centering
\scriptsize
\setlength{\tabcolsep}{2.6pt}
\renewcommand{\arraystretch}{0.7}
\resizebox{\textwidth}{!}{
\begin{tabular}{@{}lllcccccccccccc@{}}
\toprule
\multirow{2}{*}{Prefix} &
\multirow{2}{*}{Teacher} &
\multirow{2}{*}{KL} &
\multicolumn{3}{c}{GSM8K} &
\multicolumn{3}{c}{MATH500} &
\multicolumn{3}{c}{AMC23} &
\multicolumn{3}{c}{AIME24} \\
\cmidrule(lr){4-6}
\cmidrule(lr){7-9}
\cmidrule(lr){10-12}
\cmidrule(l){13-15}
& & & Avg@$k$ & Pass@$k$ & Len & Avg@$k$ & Pass@$k$ & Len & Avg@$k$ & Pass@$k$ & Len & Avg@$k$ & Pass@$k$ & Len \\
\midrule
\multirow{4}{*}{Student}
& \multirow{2}{*}{Qwen3-4B}
& Forward & 68.87 & 83.40 & 582 & 46.42 & 60.66 & 2031 & 27.50 & 52.50 & 4109 & 1.33 & 3.33 & 5049 \\
& & Reverse & 69.83 & 82.18 & 8166 & 47.37 & 60.04 & 8189 & 27.00 & 47.50 & 8192 & 1.33 & 3.33 & 8192 \\
\cmidrule(lr){2-15}
& \multirow{2}{*}{Qwen3-8B}
& Forward & 69.04 & 82.79 & 3095 & 45.71 & 60.12 & 6134 & 24.50 & 57.50 & 7138 & 2.67 & 6.67 & 8152 \\
& & Reverse & 69.60 & 81.20 & 8186 & 46.81 & 59.64 & 8191 & 28.00 & 52.50 & 8192 & 2.00 & 3.33 & 8192 \\
\midrule
\multirow{4}{*}{Teacher}
& \multirow{2}{*}{Qwen3-4B}
& Forward & 63.79 & 79.91 & 4807 & 42.09 & 57.08 & 6257 & 20.50 & 42.50 & 7004 & 1.33 & 6.67 & 7575 \\
& & Reverse & 66.64 & 78.92 & 7286 & 43.51 & 56.10 & 7951 & 21.50 & 40.00 & 8079 & 2.00 & 6.67 & 8192 \\
\cmidrule(lr){2-15}
& \multirow{2}{*}{Qwen3-8B}
& Forward & 61.23 & 79.45 & 620 & 41.59 & 58.18 & 1962 & 22.00 & 47.00 & 3447 & 1.33 & 6.67 & 5039 \\
& & Reverse & 66.26 & 78.85 & 6707 & 43.76 & 57.30 & 7669 & 21.00 & 47.50 & 8046 & 2.67 & 6.67 & 8112 \\
\bottomrule
\end{tabular}
}
\end{table}

\begin{table}[t]
\caption{Length curriculum versus fixed 4096-token student-prefix reverse-KL distillation.}
\label{tab:prefix_result_comparison}
\centering
\scriptsize
\setlength{\tabcolsep}{2.6pt}
\resizebox{\textwidth}{!}{
\begin{tabular}{@{}llccccccccccccccc@{}}
\toprule
\multirow{2}{*}{Teacher} &
\multirow{2}{*}{Method} &
\multicolumn{3}{c}{GSM8K} &
\multicolumn{3}{c}{MATH500} &
\multicolumn{3}{c}{AMC23} &
\multicolumn{3}{c}{AIME24} &
\multicolumn{3}{c}{Mean} \\
\cmidrule(lr){3-5}
\cmidrule(lr){6-8}
\cmidrule(lr){9-11}
\cmidrule(lr){12-14}
\cmidrule(l){15-17}
& & Avg@$k$ & Pass@$k$ & Len & Avg@$k$ & Pass@$k$ & Len & Avg@$k$ & Pass@$k$ & Len & Avg@$k$ & Pass@$k$ & Len & Avg@$k$ & Pass@$k$ & Len \\
\midrule
\multirow{2}{*}{Qwen3-4B}
& Fixed length & \textbf{69.8} & \textbf{82.2} & 8166 & 47.4 & 60.0 & 8189 & \textbf{27.0} & 47.5 & 8192 & 1.3 & 3.3 & 8192 & 36.4 & 48.3 & 8185 \\
& Curriculum  & 68.2 & 80.3 & \textbf{455} & \textbf{61.4} & \textbf{73.7} & \textbf{1049} & 26.5 & \textbf{52.5} & \textbf{3695} & \textbf{4.0} & \textbf{10.0} & \textbf{5553} & \textbf{40.0} & \textbf{54.1} & \textbf{2688} \\
\midrule
\multirow{2}{*}{Qwen3-8B}
& Fixed-4096 & 69.6 & 81.2 & 8186 & 46.8 & 59.6 & 8191 & \textbf{28.0} & \textbf{52.5} & 8192 & 2.0 & 3.3 & 8192 & 36.6 & 49.2 & 8190 \\
& Curriculum & \textbf{70.9} & \textbf{83.3} & \textbf{472} & \textbf{60.7} & \textbf{72.0} & \textbf{946} & 26.5 & 45.0 & \textbf{4172} & \textbf{2.7} & \textbf{6.7} & \textbf{5581} & \textbf{40.2} & \textbf{51.8} & \textbf{2793} \\
\bottomrule
\end{tabular}
}
\end{table}

\section{Methods: Balancing Accuracy and Entropy}
\label{sec:methods}

The analysis reveals two tradeoffs. First, the \textbf{KL-direction tradeoff}:
reverse KL achieves stronger Avg@$k$, but entropy collapse reduces diversity,
weakens Pass@$k$, and makes the resulting model less reliable for RL; forward
KL is weaker standalone but better preserves entropy and supports continued RL
improvement. Second, the \textbf{training-length tradeoff}: longer
distillation improves accuracy but amplifies entropy collapse and length
inflation, whereas shorter distillation is more stable but less accurate.
We therefore propose two methods, each targeting one tradeoff.

\subsection{KL Mixing}
\label{sec:kl-mixing}

As shown in Section~\ref{sec:results-kl}, reverse KL provides stronger
standalone distillation but aggressively reduces predictive entropy, whereas
forward KL better preserves entropy but is weaker in Avg@$k$. To address this
KL-direction tradeoff, we propose \textbf{KL mixing}: a distilllation loss that
interpolates between reverse and forward KL at each prefix. For a prefix $s_t$,
we define
\begin{equation}
\label{eq:kl-mixing}
\mathcal{L}_{\lambda}(s_t)
=
\lambda\,
\mathrm{KL}\!\left(q_\theta(\cdot\mid s_t)\,\|\,p_T(\cdot\mid s_t)\right)
+
(1-\lambda)\,
\mathrm{KL}\!\left(p_T(\cdot\mid s_t)\,\|\,q_\theta(\cdot\mid s_t)\right),
\end{equation}
where $\lambda\in[0,1]$ is the reverse-KL mixing weight. We evaluate KL mixing
in the student-prefix, 4096-token setting with Qwen3-4B as teacher. This setting
is where the tradeoff is most pronounced: student-prefix reverse KL achieves the
strongest standalone accuracy, but also exhibits severe entropy collapse and
response-length growth in long-sequence training.

Figure~\ref{fig:method_forward_plus_reverse} shows that KL mixing interpolates
between two imperfect endpoints. Pure reverse KL gives strong accuracy but
quickly lowers entropy and increases response length, whereas pure forward KL
keeps entropy and length stable but gives the weakest accuracy. Intermediate
mixtures trade between these behaviors, but asymmetrically: reverse-heavy and
balanced mixtures raise entropy relative to pure reverse KL, yet still exhibit
length inflation. Surprisingly, the forward-heavy mixture preserves most of the
reverse-KL accuracy, and can even match or slightly exceed it, while increasing
entropy and stabilizing length. These results suggest that \textbf{effective KL mixing
in long-sequence distillation should be forward-heavy}: reverse KL supplies the
transfer signal, but forward KL must carry enough weight to stabilize entropy
and length dynamics.


\subsection{Entropy-Gated Length Curriculum}
\label{sec:length-curriculum}

The second method addresses the training-length tradeoff. Student-prefix reverse
KL achieves the strongest standalone accuracy with long-sequence distillation,
but drives predictive entropy close to zero and pushes response length toward
the generation limit. Short-sequence training is less accurate, but keeps both
entropy and response length stable. This contrast suggests that the best horizon
may lie between the two extremes: long enough to capture much of the
long-sequence accuracy gain, but short enough to avoid entropy collapse and
length inflation.

We therefore propose an \textbf{Entropy-Gated Length Curriculum}. Training starts from a
short training horizon $L_0$ and gradually increases the maximum training
length. During this process, we monitor mean per-token predictive entropy on a
held-out set and track response length as a diagnostic. The curriculum advances
from $L_m$ to $L_{m+1}$ only if the held-out entropy satisfies
$H_m \geq H_{\min}$, where $H_m$ denotes the predictive entropy at training
horizon $L_m$. If this condition fails, we stop increasing the training length
and either terminate distillation or continue training at the last stable
horizon.

We evaluate the length curriculum in the student-prefix reverse-KL setting,
where fixed 4096-token distillation gives strong accuracy but suffers severe
entropy collapse and length inflation. Against fixed 4096-token training with
both Qwen3-4B and Qwen3-8B teachers, Table~\ref{tab:prefix_result_comparison}
shows that the curriculum achieves the intended accuracy--stability tradeoff.
With Qwen3-4B, mean Avg@$k$ improves from $36.4$ to $40.0$ and Pass@$k$ from
$48.3$ to $54.1$, while average length drops from $8185$ to $2688$ tokens; with
Qwen3-8B, Avg@$k$ improves from $36.6$ to $40.2$ and Pass@$k$ from $49.2$ to
$51.8$, while length drops from $8190$ to $2793$. The gains are especially large
on MATH500, where Avg@$k$ increases by roughly $14$ points for both teachers and
length falls from near the generation limit to around $1000$ tokens. Overall,
the curriculum retains the main accuracy benefit of long-horizon distillation
while avoiding length inflation and improving Pass@$k$, which is closely tied
to output diversity.


\section{Conclusion}
\label{sec:conclusion}

We present a decoupled view of LLM distillation that separates prefix source
from KL direction, yielding four objectives connected to off-policy SFT,
DAgger-style on-policy SFT, offline-RL-style distillation, and OPD. Empirically,
these objectives expose accuracy--entropy, quality--compute, and
accuracy--stability tradeoffs across standalone distillation and downstream RL.
We further propose KL mixing and an entropy-gated length curriculum to better
balance distillation performance with diversity and generation stability. Overall,
reasoning distillation should be designed as a controlled tradeoff among
accuracy, diversity, compute, and continued trainability.

\newpage

\bibliographystyle{plainnat}
\bibliography{references}

\clearpage
\appendix
\section*{Appendix}
\section{Related Work}
\label{app:related-work}

\paragraph{Off-policy distillation for LLMs.}
Off-policy distillation Knowledge distillation was originally developed for model compression~\citep{hinton2015kd}.
In modern LLM post-training, Off-policy distillation has increasingly become an important mechanism for
transferring capabilities from strong teachers to trainable students, rather
than merely a tool for compression. For reasoning LLMs, this is often
instantiated as SFT on fixed teacher-generated traces, where sampled teacher
tokens serve as hard labels and provide a single-sample approximation to
forward-KL distillation.
This teacher-trace recipe was made prominent by \citet{deepseek-r1} and has since been adopted and studied in subsequent reasoning-distillation efforts~\citep{muennighoff2025s1,sky_t1_2025,openthoughts,key-factors-cot-distillation}.
These works demonstrate the effectiveness of teacher trajectories, but leave
implicit the coupling between teacher prefixes and forward-KL/SFT-style
supervision.

\paragraph{On-policy distillation for LLMs.}
On-policy distillation was initially proposed to reduce the train--test
distribution mismatch in autoregressive KD~\citep{gu2023minillm,agarwal2023gkd}.
More recently, OPD has emerged as a promising post-training recipe because it
combines two complementary advantages: the on-policy sampling of RL and the
dense token-level supervision of SFT~\citep{lu2025opd}. Motivated by this view,
recent open-source LLM training pipelines have adopted OPD as an important
post-training component~\citep{qwen3,mimo-v2,glm5,nemotron-cascade2,HY-MT1.5,
HY-Embodied-0.5,KAT-Coder-V2,deepseekv4}. In parallel, a growing body of work
has studied the stability of OPD training~\citep{jang2026stableopd,jin2026eopd,
ko2026reopold,fu2026revisitopd,luo2026demystify,li2026rethinkopd,
zhang2026illusion}. Our work studies OPD through the two design choices it
couples by default: student-generated prefixes and reverse-KL token-level
supervision.

\section{Proof of Proposition~\ref{prop:sl-rl}}
\label{app:proof-sl-rl}

\begin{proof}
\noindent\emph{(i) Forward KL.}
For a fixed prefix $s_t$, abbreviate
$p_T(y)=p_T(y\mid s_t)$ and $q_\theta(y)=q_\theta(y\mid s_t)$.
Since $p_T$ is fixed,
\begin{align}
\nabla_\theta \mathrm{KL}(p_T\|q_\theta)(s_t)
&=
\nabla_\theta
\sum_{y\in\mathcal V}
p_T(y)\log\frac{p_T(y)}{q_\theta(y)}
\notag
:=
-\sum_{y\in\mathcal V}
p_T(y)\nabla_\theta\log q_\theta(y)
\notag\\
&=
-\mathbb{E}_{y\sim p_T(\cdot\mid s_t)}
\!\left[\nabla_\theta\log q_\theta(y\mid s_t)\right].
\label{eq:fkl-proof-expansion}
\end{align}
This is the gradient of the cross-entropy
$H(p_T,q_\theta)=-\mathbb{E}_{y\sim p_T(\cdot\mid s_t)}
[\log q_\theta(y\mid s_t)]$, proving Proposition~\ref{prop:sl-rl}(i).

\medskip
\noindent\emph{(ii) Reverse KL.}
For the reverse direction,
\begin{equation}
\mathrm{KL}(q_\theta\|p_T)(s_t)
:=
\sum_{y\in\mathcal V}
q_\theta(y)
\bigl[\log q_\theta(y)-\log p_T(y)\bigr].
\label{eq:rkl-token-objective}
\end{equation}
Taking the gradient gives
\begin{align}
\nabla_\theta \mathrm{KL}(q_\theta\|p_T)(s_t)
&=
\nabla_\theta
\sum_{y\in\mathcal V}
q_\theta(y)\log\frac{q_\theta(y)}{p_T(y)}
\notag
:=
\sum_{y\in\mathcal V}
\bigl[\log q_\theta(y)-\log p_T(y)\bigr]
\nabla_\theta q_\theta(y)
\notag\\
&\quad+
\sum_{y\in\mathcal V}
q_\theta(y)\nabla_\theta\log q_\theta(y)
:=
\sum_{y\in\mathcal V}
\bigl[\log q_\theta(y)-\log p_T(y)+1\bigr]
\nabla_\theta q_\theta(y).
\label{eq:rkl-product-rule}
\end{align}
The $+1$ term vanishes after summing over $y$, since
\begin{equation}
\sum_{y\in\mathcal V}\nabla_\theta q_\theta(y)
:=
\nabla_\theta\sum_{y\in\mathcal V}q_\theta(y)
:=
\nabla_\theta 1
:=
0.
\label{eq:probability-mass-gradient}
\end{equation}
Using the score-function identity,
$\nabla_\theta q_\theta(y)=q_\theta(y)\nabla_\theta\log q_\theta(y)$, we obtain
\begin{align}
\nabla_\theta \mathrm{KL}(q_\theta\|p_T)(s_t)
&=
\sum_{y\in\mathcal V}
q_\theta(y)
\bigl[\log q_\theta(y)-\log p_T(y)\bigr]
\nabla_\theta\log q_\theta(y)
\notag\\
&=
\mathbb{E}_{y\sim q_\theta(\cdot\mid s_t)}
\!\left[
\bigl(\log q_\theta(y\mid s_t)-\log p_T(y\mid s_t)\bigr)
\nabla_\theta\log q_\theta(y\mid s_t)
\right],
\label{eq:rkl-score-form}
\end{align}
which proves Proposition~\ref{prop:sl-rl}(ii). Therefore the negative gradient used for
minimization is
\begin{equation}
-\nabla_\theta \mathrm{KL}(q_\theta\|p_T)(s_t)
:=
\mathbb{E}_{y\sim q_\theta(\cdot\mid s_t)}
\!\left[
\bigl(\log p_T(y\mid s_t)-\log q_\theta(y\mid s_t)\bigr)
\nabla_\theta\log q_\theta(y\mid s_t)
\right],
\label{eq:rkl-negative-gradient}
\end{equation}
the REINFORCE ascent estimator with dense reward
$r(s_t,y)=\log p_T(y\mid s_t)-\log q_\theta(y\mid s_t)$.
\renewcommand{\qedsymbol}{}
\end{proof}

\section{Derivation of Sequence-level KL Decomposition}
\label{app:kl-decomposition}

In this section, we provide the full derivation of the decomposition from sequence-level KL divergence to token-level KL divergence. For simplicity, we consider an output sequence $y=(y_1,\ldots,y_T)$ with fixed length $T$. Variable-length sequences can be handled by treating the end-of-sequence token as part of the vocabulary. Let the state at time $t$ be
\[
s_t = (x, y_{<t}).
\]
Both the teacher distribution $p_T$ and the student distribution $q_\theta$ are autoregressive:
\[
p_T(y|x) = \prod_{t=1}^T p_T(y_t|s_t),
\qquad
q_\theta(y|x) = \prod_{t=1}^T q_\theta(y_t|s_t).
\]
Therefore,
\[
\log p_T(y|x) = \sum_{t=1}^T \log p_T(y_t|s_t),
\qquad
\log q_\theta(y|x) = \sum_{t=1}^T \log q_\theta(y_t|s_t).
\]

\paragraph{Forward KL.}
We first derive the decomposition of the forward KL divergence:
\[
\mathrm{KL}(p_T\|q_\theta)(x)
=
\mathbb{E}_{y\sim p_T(\cdot|x)}
\left[
\log \frac{p_T(y|x)}{q_\theta(y|x)}
\right].
\]
Using the autoregressive factorization, we have
\[
\begin{aligned}
\mathrm{KL}(p_T\|q_\theta)(x)
&=
\mathbb{E}_{y\sim p_T(\cdot|x)}
\left[
\sum_{t=1}^T
\log
\frac{
p_T(y_t|s_t)
}{
q_\theta(y_t|s_t)
}
\right] =
\sum_{t=1}^T
\mathbb{E}_{y\sim p_T(\cdot|x)}
\left[
\log
\frac{
p_T(y_t|s_t)
}{
q_\theta(y_t|s_t)
}
\right].
\end{aligned}
\]
For each time step $t$, we apply the tower property, also known as the law of total expectation, by first conditioning on the prefix state $s_t=(x,y_{<t})$:
\[
\begin{aligned}
&\mathbb{E}_{y\sim p_T(\cdot|x)}
\left[
\log
\frac{
p_T(y_t|s_t)
}{
q_\theta(y_t|s_t)
}
\right] \quad =
\mathbb{E}_{s_t\sim d_T^t}
\left[
\mathbb{E}_{y_t\sim p_T(\cdot|s_t)}
\left[
\log
\frac{
p_T(y_t|s_t)
}{
q_\theta(y_t|s_t)
}
\right]
\right],
\end{aligned}
\]
where $d_T^t$ denotes the distribution over states $s_t=(x,y_{<t})$ induced by rolling out the teacher policy $p_T$ up to time step $t$. The inner expectation is exactly the token-level KL divergence at state $s_t$:
\[
\mathbb{E}_{y_t\sim p_T(\cdot|s_t)}
\left[
\log
\frac{
p_T(y_t|s_t)
}{
q_\theta(y_t|s_t)
}
\right]
=
\mathrm{KL}\left(
p_T(\cdot|s_t)\|q_\theta(\cdot|s_t)
\right).
\]
Thus,
\[
\boxed{
\mathrm{KL}(p_T\|q_\theta)(x)
=
\sum_{t=1}^T
\mathbb{E}_{s_t\sim d_T^t}
\left[
\mathrm{KL}\left(
p_T(\cdot|s_t)\|q_\theta(\cdot|s_t)
\right)
\right].
}
\]
This shows that the sequence-level forward KL decomposes into a sum of token-level forward KL terms, where the states are distributed according to the teacher-induced trajectory distribution.

\paragraph{Reverse KL.}
We can similarly derive the decomposition of the reverse KL divergence:
\[
\mathrm{KL}(q_\theta\|p_T)(x)
=
\mathbb{E}_{y\sim q_\theta(\cdot|x)}
\left[
\log \frac{q_\theta(y|x)}{p_T(y|x)}
\right].
\]
Using the same autoregressive factorization,
\[
\begin{aligned}
\mathrm{KL}(q_\theta\|p_T)(x)
&=
\mathbb{E}_{y\sim q_\theta(\cdot|x)}
\left[
\sum_{t=1}^T
\log
\frac{
q_\theta(y_t|s_t)
}{
p_T(y_t|s_t)
}
\right] =
\sum_{t=1}^T
\mathbb{E}_{y\sim q_\theta(\cdot|x)}
\left[
\log
\frac{
q_\theta(y_t|s_t)
}{
p_T(y_t|s_t)
}
\right].
\end{aligned}
\]
Again applying the tower property at each time step,
\[
\begin{aligned}
&\mathbb{E}_{y\sim q_\theta(\cdot|x)}
\left[
\log
\frac{
q_\theta(y_t|s_t)
}{
p_T(y_t|s_t)
}
\right] =
\mathbb{E}_{s_t\sim d_\theta^t}
\left[
\mathbb{E}_{y_t\sim q_\theta(\cdot|s_t)}
\left[
\log
\frac{
q_\theta(y_t|s_t)
}{
p_T(y_t|s_t)
}
\right]
\right],
\end{aligned}
\]
where $d_\theta^t$ denotes the distribution over states induced by rolling out the student policy $q_\theta$ up to time step $t$. The inner expectation is the token-level reverse KL:
\[
\mathbb{E}_{y_t\sim q_\theta(\cdot|s_t)}
\left[
\log
\frac{
q_\theta(y_t|s_t)
}{
p_T(y_t|s_t)
}
\right]
=
\mathrm{KL}\left(
q_\theta(\cdot|s_t)\|p_T(\cdot|s_t)
\right).
\]
Therefore,
\[
\boxed{
\mathrm{KL}(q_\theta\|p_T)(x)
=
\sum_{t=1}^T
\mathbb{E}_{s_t\sim d_\theta^t}
\left[
\mathrm{KL}\left(
q_\theta(\cdot|s_t)\|p_T(\cdot|s_t)
\right)
\right].
}
\]

The key difference between the two decompositions lies in the state distribution. In the forward KL, token-level KL terms are evaluated under the teacher-induced state distribution $d_T^t$. In the reverse KL, they are evaluated under the student-induced state distribution $d_\theta^t$. Therefore, although both objectives decompose into sums of token-level KL divergences, they couple token-level learning through different trajectory distributions.

\section{Relation between the Reverse-KL Reward and Schulman's KL Estimators}
\label{app:schulman-kl-estimators}

In this section, we clarify the connection between our reverse-KL reward and the KL estimators discussed by Schulman. Consider a fixed state $s_t=(x,y_{<t})$. Let $q_\theta(\cdot|s_t)$ denote the student policy and $p_T(\cdot|s_t)$ denote the teacher policy. The token-level reverse KL is
\[
\mathrm{KL}\left(q_\theta(\cdot|s_t)\|p_T(\cdot|s_t)\right)
=
\mathbb{E}_{y\sim q_\theta(\cdot|s_t)}
\left[
\log \frac{q_\theta(y|s_t)}{p_T(y|s_t)}
\right].
\]
Define the likelihood ratio
\[
\rho(y,s_t)
=
\frac{p_T(y|s_t)}{q_\theta(y|s_t)}.
\]
Then the reverse KL can be written as
\[
\mathrm{KL}\left(q_\theta(\cdot|s_t)\|p_T(\cdot|s_t)\right)
=
\mathbb{E}_{y\sim q_\theta(\cdot|s_t)}
\left[
-\log \rho(y,s_t)
\right].
\]

\paragraph{Connection to the log-ratio reward.}
The reward used in Proposition~\autoref{prop:sl-rl}(ii) is
\[
r_{\mathrm{KL}}(y,s_t)
=
\log p_T(y|s_t)-\log q_\theta(y|s_t)
=
\log \rho(y,s_t).
\]
Therefore,
\[
\mathbb{E}_{y\sim q_\theta(\cdot|s_t)}
\left[
r_{\mathrm{KL}}(y,s_t)
\right]
=
-\mathrm{KL}\left(q_\theta(\cdot|s_t)\|p_T(\cdot|s_t)\right).
\]
Thus, maximizing the expected reward $r_{\mathrm{KL}}$ is equivalent to minimizing the token-level reverse KL. Equivalently, if the KL is written as a penalty, the corresponding single-sample estimator is
\[
k_1(y,s_t)
=
-r_{\mathrm{KL}}(y,s_t)
=
\log q_\theta(y|s_t)-\log p_T(y|s_t).
\]
This is Schulman's $k_1$ estimator for the reverse KL under samples from $q_\theta$. It is unbiased:
\[
\mathbb{E}_{y\sim q_\theta(\cdot|s_t)}
\left[
k_1(y,s_t)
\right]
=
\mathrm{KL}\left(q_\theta(\cdot|s_t)\|p_T(\cdot|s_t)\right),
\]
but it is not guaranteed to be non-negative for each individual sample.

\paragraph{Schulman's three KL estimators.}
Using the ratio
\[
\rho(y,s_t)
=
\frac{p_T(y|s_t)}{q_\theta(y|s_t)},
\]
the three commonly used single-sample KL estimators for
$\mathrm{KL}(q_\theta\|p_T)$ are
\[
k_1(y,s_t)
=
-\log \rho(y,s_t),
\]
\[
k_2(y,s_t)
=
\frac{1}{2}
\left(
\log \rho(y,s_t)
\right)^2,
\]
and
\[
k_3(y,s_t)
=
\rho(y,s_t)-1-\log \rho(y,s_t).
\]

The $k_1$ estimator is unbiased because
\[
\mathbb{E}_{y\sim q_\theta}
\left[
-\log \rho(y,s_t)
\right]
=
\mathbb{E}_{y\sim q_\theta}
\left[
\log \frac{q_\theta(y|s_t)}{p_T(y|s_t)}
\right]
=
\mathrm{KL}\left(q_\theta(\cdot|s_t)\|p_T(\cdot|s_t)\right).
\]
However, $k_1$ can be negative for individual samples, even though its expectation is non-negative.

The $k_2$ estimator,
\[
k_2(y,s_t)
=
\frac{1}{2}
\left(
\log \rho(y,s_t)
\right)^2,
\]
is always non-negative. It is not an unbiased estimator of the KL divergence, but it is a second-order approximation when $p_T$ and $q_\theta$ are close. To see this, let
\[
\delta(y,s_t)=\log \rho(y,s_t).
\]
Since
\[
\mathbb{E}_{y\sim q_\theta}
\left[
\rho(y,s_t)
\right]
=
\sum_y q_\theta(y|s_t)
\frac{p_T(y|s_t)}{q_\theta(y|s_t)}
=
1,
\]
we have
\[
\mathbb{E}_{y\sim q_\theta}
\left[
e^{\delta(y,s_t)}
\right]
=
1.
\]
Using the Taylor expansion $e^\delta=1+\delta+\frac{1}{2}\delta^2+O(\delta^3)$ gives
\[
0
=
\mathbb{E}_{y\sim q_\theta}
\left[
\delta(y,s_t)
+
\frac{1}{2}\delta(y,s_t)^2
+
O(\delta(y,s_t)^3)
\right].
\]
Therefore,
\[
-\mathbb{E}_{y\sim q_\theta}
\left[
\delta(y,s_t)
\right]
=
\frac{1}{2}
\mathbb{E}_{y\sim q_\theta}
\left[
\delta(y,s_t)^2
\right]
+
O(\delta^3).
\]
Since the left-hand side is exactly
\[
\mathrm{KL}\left(q_\theta(\cdot|s_t)\|p_T(\cdot|s_t)\right),
\]
we obtain the local approximation
\[
\mathrm{KL}\left(q_\theta(\cdot|s_t)\|p_T(\cdot|s_t)\right)
\approx
\mathbb{E}_{y\sim q_\theta}
\left[
\frac{1}{2}
\left(
\log \rho(y,s_t)
\right)^2
\right].
\]

The $k_3$ estimator is
\[
k_3(y,s_t)
=
\rho(y,s_t)-1-\log \rho(y,s_t).
\]
It is unbiased because
\[
\begin{aligned}
\mathbb{E}_{y\sim q_\theta}
\left[
k_3(y,s_t)
\right]
&=
\mathbb{E}_{y\sim q_\theta}
\left[
\rho(y,s_t)-1-\log \rho(y,s_t)
\right] \\
&=
\mathbb{E}_{y\sim q_\theta}
\left[
\rho(y,s_t)
\right]
-
1
-
\mathbb{E}_{y\sim q_\theta}
\left[
\log \rho(y,s_t)
\right] \\
&=
1-1
+
\mathrm{KL}\left(q_\theta(\cdot|s_t)\|p_T(\cdot|s_t)\right) \\
&=
\mathrm{KL}\left(q_\theta(\cdot|s_t)\|p_T(\cdot|s_t)\right).
\end{aligned}
\]
Moreover, $k_3$ is non-negative for every sample because
\[
z-1-\log z \ge 0
\qquad
\text{for all } z>0,
\]
with equality if and only if $z=1$. Taking $z=\rho(y,s_t)$ gives
\[
k_3(y,s_t)\ge 0.
\]

\paragraph{Implementation used in this work.}
Our implementation uses the log-ratio reward form
\[
r_{\mathrm{KL}}(y,s_t)
=
\log p_T(y|s_t)-\log q_\theta(y|s_t).
\]
Equivalently, when written as a KL penalty, this corresponds to the $k_1$ estimator
\[
k_1(y,s_t)
=
\log q_\theta(y|s_t)-\log p_T(y|s_t).
\]
Thus, our method directly optimizes the negative single-sample reverse-KL estimator as a reward. We do not use the squared-log approximation $k_2$ or the non-negative unbiased estimator $k_3$ in the reward. In terms of the reward notation $r_{\mathrm{KL}}=\log p_T-\log q_\theta$, the three corresponding penalty forms are
\[
k_1=-r_{\mathrm{KL}},
\qquad
k_2=\frac{1}{2}r_{\mathrm{KL}}^2,
\qquad
k_3=\exp(r_{\mathrm{KL}})-1-r_{\mathrm{KL}}.
\]
This distinction is important: the reward in Proposition~\autoref{prop:sl-rl}(ii) is a log-ratio reward, i.e., the negative of the $k_1$ reverse-KL penalty estimator, rather than the $k_2$ or $k_3$ KL penalty.

\section{Connection between Student-Prefix Forward KL and DAgger}
\label{app:dagger-connection}

In this section, we clarify the connection between the student-prefix forward-KL objective and DAgger-style imitation learning. The main point is that student-prefix forward KL corresponds to the supervised learning subproblem of DAgger when the expert is a soft teacher distribution and the imitation loss is cross-entropy.

\paragraph{DAgger-style imitation learning.}
DAgger \cite{ross2011dagger} is an iterative imitation-learning algorithm designed to address distribution shift caused by training only on expert-induced states. At iteration $k$, the learner policy $\pi_k$ is rolled out to collect states from the learner-induced state distribution. The expert policy $\pi^\star$ is then queried on these states to provide supervision. The collected state-action pairs are aggregated into a dataset, and the next learner policy is trained by minimizing a supervised imitation loss on the aggregated dataset.

In the standard hard-label setting, the expert provides an action
\[
a^\star \sim \pi^\star(\cdot|s),
\]
and the learner is trained to minimize a loss of the form
\[
\ell(\pi_\theta(s), a^\star).
\]
In our setting, the teacher provides a full distribution over next tokens, so the expert is a soft-label expert rather than a hard-label expert.

\paragraph{Correspondence to our setting.}
For autoregressive generation, the state at time $t$ is
\[
s_t = (x, y_{<t}).
\]
The student policy is
\[
\pi_\theta(\cdot|s_t) = q_\theta(\cdot|s_t),
\]
and the expert policy is the teacher distribution
\[
\pi^\star(\cdot|s_t) = p_T(\cdot|s_t).
\]
Let $d_\theta^t$ denote the distribution over prefixes $s_t$ induced by rolling out the student policy $q_\theta$ up to time step $t$.

The student-prefix forward-KL objective is
\[
\mathcal{L}_{\mathrm{SP\text{-}FKL}}(\theta)
=
\sum_{t=1}^T
\mathbb{E}_{s_t\sim d_\theta^t}
\left[
\mathrm{KL}
\left(
p_T(\cdot|s_t)\|q_\theta(\cdot|s_t)
\right)
\right].
\]
For a fixed state $s_t$, the forward KL can be written as
\[
\begin{aligned}
\mathrm{KL}
\left(
p_T(\cdot|s_t)\|q_\theta(\cdot|s_t)
\right)
&=
\sum_y
p_T(y|s_t)
\log
\frac{
p_T(y|s_t)
}{
q_\theta(y|s_t)
} \\
&=
\sum_y
p_T(y|s_t)
\log p_T(y|s_t)
-
\sum_y
p_T(y|s_t)
\log q_\theta(y|s_t) \\
&=
-H\left(p_T(\cdot|s_t)\right)
+
H\left(p_T(\cdot|s_t), q_\theta(\cdot|s_t)\right),
\end{aligned}
\]
where
\[
H\left(p_T(\cdot|s_t), q_\theta(\cdot|s_t)\right)
=
-\sum_y p_T(y|s_t)\log q_\theta(y|s_t)
\]
is the cross-entropy from the teacher distribution to the student distribution. Since
\[
H\left(p_T(\cdot|s_t)\right)
=
-\sum_y p_T(y|s_t)\log p_T(y|s_t)
\]
does not depend on the student parameters $\theta$, minimizing
\[
\mathrm{KL}
\left(
p_T(\cdot|s_t)\|q_\theta(\cdot|s_t)
\right)
\]
with respect to $\theta$ is equivalent to minimizing the soft-label cross-entropy
\[
H\left(p_T(\cdot|s_t), q_\theta(\cdot|s_t)\right).
\]
Therefore,
\[
\arg\min_\theta
\mathcal{L}_{\mathrm{SP\text{-}FKL}}(\theta)
=
\arg\min_\theta
\sum_{t=1}^T
\mathbb{E}_{s_t\sim d_\theta^t}
\left[
H\left(
p_T(\cdot|s_t),
q_\theta(\cdot|s_t)
\right)
\right],
\]
up to terms independent of $\theta$.

This is precisely the DAgger-style supervised imitation objective under the following correspondence:
\[
\text{DAgger expert } \pi^\star
\quad \leftrightarrow \quad
\text{teacher } p_T,
\]
\[
\text{DAgger learner } \pi_\theta
\quad \leftrightarrow \quad
\text{student } q_\theta,
\]
\[
\text{DAgger state distribution } d_{\pi_\theta}
\quad \leftrightarrow \quad
\text{student-prefix distribution } d_\theta,
\]
\[
\text{DAgger supervised loss}
\quad \leftrightarrow \quad
\text{soft-label cross-entropy / forward KL}.
\]

\paragraph{Equivalence to the DAgger supervised subproblem.}
At iteration $k$, suppose the student policy $q_{\theta_k}$ is rolled out to collect prefixes
\[
s_t \sim d_{\theta_k}^t.
\]
The teacher is then queried at each collected prefix to obtain the soft target distribution
\[
p_T(\cdot|s_t).
\]
The supervised update solves
\[
\theta_{k+1}
\in
\arg\min_\theta
\sum_{i=1}^k
\sum_{t=1}^T
\mathbb{E}_{s_t\sim d_{\theta_i}^t}
\left[
H\left(
p_T(\cdot|s_t),
q_\theta(\cdot|s_t)
\right)
\right],
\]
where the sum over $i$ corresponds to dataset aggregation across previous student rollouts. Equivalently, since the teacher entropy term is independent of $\theta$, this can be written as
\[
\theta_{k+1}
\in
\arg\min_\theta
\sum_{i=1}^k
\sum_{t=1}^T
\mathbb{E}_{s_t\sim d_{\theta_i}^t}
\left[
\mathrm{KL}
\left(
p_T(\cdot|s_t)\|q_\theta(\cdot|s_t)
\right)
\right].
\]
Thus, with dataset aggregation, student-prefix forward KL exactly matches the supervised learning subproblem solved by DAgger, generalized from hard expert actions to soft teacher distributions.

If the implementation does not aggregate all previous student-prefix data and instead trains only on the current student-induced distribution $d_{\theta_k}$, then the method should be understood as an on-policy or DAgger-style variant rather than a literal implementation of the original DAgger algorithm.

\paragraph{Relation to hard-label DAgger.}
The connection becomes especially clear when the teacher distribution is deterministic. Suppose
\[
p_T(y|s_t) = \mathbf{1}\{y = y_T^\star(s_t)\}.
\]
Then the soft-label cross-entropy reduces to
\[
H\left(p_T(\cdot|s_t), q_\theta(\cdot|s_t)\right)
=
-\log q_\theta(y_T^\star(s_t)|s_t).
\]
This is the standard negative log-likelihood imitation loss on the expert action. Therefore, hard-label DAgger can be viewed as a special case of the student-prefix forward-KL objective where the teacher distribution is a point mass.

\paragraph{Difference from the original DAgger guarantee.}
The original DAgger analysis is typically stated in terms of online no-regret learning with imitation losses such as classification or surrogate losses, and it is often connected to bounds on task performance under the learner-induced state distribution. Our objective uses a KL or cross-entropy imitation loss with a soft teacher distribution. Therefore, the correspondence above should be interpreted as an objective-level equivalence to the DAgger supervised learning subproblem, not as a direct reuse of the original DAgger performance bound.

In summary, student-prefix forward KL can be viewed as a soft-label DAgger-style objective: it trains the student to match the teacher on states induced by the student's own rollouts. The essential DAgger principle is preserved because supervision is applied under the learner-induced state distribution rather than only under the teacher-induced state distribution.

\section{Offline-RL Interpretation of Teacher-Prefix Reverse KL}
\label{app:offline-rl-interpretation}

We briefly clarify the connection between teacher-prefix reverse KL and offline policy optimization. In standard offline RL, a behavior policy collects a fixed dataset, and a target policy is optimized using this fixed distribution without further environment interaction. In our setting, the teacher plays the role of the behavior policy: prefixes are sampled from the teacher-induced distribution $d_T^t$, while the student $q_\theta$ is the target policy optimized on these prefixes.

The teacher-prefix reverse-KL objective is
\[
\mathcal{L}_{\mathrm{TP\text{-}RKL}}(\theta)
=
\sum_{t=1}^T
\mathbb{E}_{s_t\sim d_T^t}
\left[
\mathrm{KL}\left(
q_\theta(\cdot|s_t)\|p_T(\cdot|s_t)
\right)
\right].
\]
Expanding the KL term gives
\[
\begin{aligned}
\mathcal{L}_{\mathrm{TP\text{-}RKL}}(\theta)
&=
\sum_{t=1}^T
\mathbb{E}_{s_t\sim d_T^t}
\mathbb{E}_{y_t\sim q_\theta(\cdot|s_t)}
\left[
\log q_\theta(y_t|s_t)-\log p_T(y_t|s_t)
\right].
\end{aligned}
\]
Therefore, minimizing teacher-prefix reverse KL is equivalent to maximizing
\[
\mathcal{J}_{\mathrm{TP\text{-}RKL}}(\theta)
=
\sum_{t=1}^T
\mathbb{E}_{s_t\sim d_T^t}
\mathbb{E}_{y_t\sim q_\theta(\cdot|s_t)}
\left[
\log p_T(y_t|s_t)-\log q_\theta(y_t|s_t)
\right].
\]
This has the form of an entropy-regularized policy optimization objective on an offline state distribution. The reward term is $\log p_T(y_t|s_t)$, and the entropy regularization is given by $-\log q_\theta(y_t|s_t)$. Equivalently, one can view the per-token log-ratio reward as
\[
r_{\mathrm{KL}}(y_t,s_t)
=
\log p_T(y_t|s_t)-\log q_\theta(y_t|s_t).
\]

Since the state distribution $d_T^t$ is fixed with respect to $\theta$, the gradient takes a policy-gradient form:
\[
\begin{aligned}
\nabla_\theta \mathcal{J}_{\mathrm{TP\text{-}RKL}}(\theta)
&=
\sum_{t=1}^T
\mathbb{E}_{s_t\sim d_T^t}
\mathbb{E}_{y_t\sim q_\theta(\cdot|s_t)}
\left[
\nabla_\theta \log q_\theta(y_t|s_t)
\left(
\log p_T(y_t|s_t)-\log q_\theta(y_t|s_t)
\right)
\right].
\end{aligned}
\]
Thus, the teacher-prefix reverse-KL update can be interpreted as an offline policy-gradient update: states are drawn from the teacher-induced distribution, actions are sampled from the current student policy, and the reward is the log-ratio reward above. No action-level importance sampling is required because the action expectation is already taken under $q_\theta(\cdot|s_t)$ rather than under the teacher.

This interpretation is only an objective-level analogy to offline RL. Unlike standard offline RL methods such as CQL \citep{CQL} or IQL \citep{IQL}, this objective does not learn a value function, does not perform Bellman backups, and does not optimize the closed-loop trajectory distribution induced by the student. It instead optimizes the student policy on teacher-induced prefixes, with the teacher likelihood providing a dense per-token reward. Therefore, teacher-prefix reverse KL should be viewed as an offline-RL-style policy optimization objective, rather than as an instance of standard offline RL algorithms.

\section{Detailed Experimental Settings}
\label{app:detailed-setup}

\subsection{Model Family}
\label{app:models-scope}

We use Qwen3-0.6B-Base as the student model, with Qwen3-4B-Base and Qwen3-8B-Base serving as teacher models. These models are from the same Qwen3 family and share the same tokenizer and architecture, enabling a controlled comparison across model scales.

\subsection{Training Data and Prefix Construction}
\label{app:data-prefixes}

The core training domain is mathematical reasoning. For standalone distillation, all objectives use the same prompt distribution drawn from DeepScaleR, so that the problem distribution is held fixed while only the prefix source and KL direction vary. For teacher-prefix objectives, we generate offline teacher rollouts from this shared prompt set, compute the corresponding logits, and reuse the resulting prefixes and cached logits throughout training. In practice, these rollouts are generated with temperature~1.0, top-$p$~0.95, top-$k$~20, and batch size~128.

For student-prefix objectives, responses are instead sampled on-policy from the current student under the same prompt distribution. The teacher is then prefilled only to provide token-level supervision on the states visited by the student. Therefore, the key distinction between teacher-prefix and student-prefix training is not the underlying problem set, but the state distribution on which matching is enforced. For the RL follow-up stage, we use the same training dataset as in the distillation stage, reformatted for grouped on-policy rollouts and outcome-based accuracy reward computation.

\subsection{Standalone Distillation Hyperparameters}
\label{app:standalone-hparams}

We run standalone distillation under two sequence-length regimes: a short regime with a maximum generated length of 128 tokens, and a long regime with a maximum generated length of 4096 tokens. Unless otherwise noted, all four decoupled objectives share the following optimization hyperparameters:
\begin{itemize}
    \item bf16 training;
    \item learning rate $5\times 10^{-7}$;
    \item batch size 32;
    \item constant learning-rate schedule with 5\% warmup;
    \item 1000 optimizer steps;
    \item random seed 42;
    \item evaluation every 30 steps;
    \item checkpointing every 40 steps, with a save-total-limit of 10.
\end{itemize}

For on-policy student-prefix training and intermediate evaluation, we use colocated vLLM \citep{vllm} decoding with tensor parallel size 1. The GPU memory utilization is set conservatively to ensure stable rollout generation throughout training.

\subsection{Full-Vocabulary Matching}
\label{app:reverse-kl-objective}

Our reverse-KL and forward-KL implementations compute the loss over the full teacher and student vocabulary distributions, rather than restricting matching to the teacher's top-$k$ support. This is important because the behavior of KL-based objectives depends on the complete log-ratio geometry over the vocabulary.

\subsection{Math Evaluation During Training}
\label{app:math-eval-protocol}

During both standalone distillation and RL, we evaluate the model every 30 steps on a held-out math benchmark. The in-training evaluation callback uses the same vLLM-based generation pipeline across all methods. Unless otherwise stated, evaluation is performed with temperature~0.6 and a maximum generation budget of 4096 tokens.

\subsection{RL / GRPO Follow-Up Configuration}
\label{app:rlvr-details}

For the RL follow-up experiments, we use GRPO with an accuracy-only outcome reward. We do not include auxiliary format rewards in the primary comparison. Unless otherwise noted, the main GRPO settings are:
\begin{itemize}
    \item bf16 training;
    \item learning rate $10^{-6}$;
    \item batch size 32;
    \item group size 8;
    \item maximum response length 4096;
    \item 1000 RL update steps;
    \item constant learning-rate schedule with 5\% warmup;
    \item no explicit online teacher KL penalty (\(\beta=0\));
    \item one policy update per generated group.
\end{itemize}

For on-policy generation during GRPO, we use vLLM in colocated mode with tensor parallel size 1.

\subsection{Computational Resources}
\label{Computational_Resources}
Model training was conducted on a single NVIDIA H100 GPU with 94 GB of GPU memory. 
On-policy distillation took approximately 16 hours on average, while reinforcement learning (RL) took approximately 14 hours on average.

\section{Implementation Details}
\label{app:implementation}

\subsection{FLOPs Accounting for Teacher-Prefix and Student-Prefix Reverse KL}
\label{app:flops-accounting}

When comparing teacher-prefix and student-prefix training dynamics under a
compute-normalized x-axis, we distinguish two accounting conventions.
If teacher-prefix data are precomputed only as trajectories, then each
teacher-prefix update still requires a teacher forward pass to obtain the
teacher distribution.
If teacher logits are also precomputed and cached, the online training cost of
teacher-prefix reverse KL excludes this teacher forward pass.
The latter convention is useful for measuring the best-case computational
advantage of offline teacher-prefix training.

Let \(B\) be the batch size, \(P\) the average prompt length, \(R\) the response
length, and \(L=P+R\).
Let \(F_s(L)\) and \(F_t(L)\) denote student and teacher full-sequence forward
FLOPs on a batch of length \(L\), \(B_s(L)\) the student backward FLOPs, and
\(G_s(P,R)\) the student autoregressive generation FLOPs with KV cache.
We use the standard approximation \(B_s(L)\approx 2F_s(L)\).
Ignoring the comparatively small elementwise KL arithmetic, teacher-prefix
reverse KL with cached teacher logits costs
\[
C_{\mathrm{off}}^{\mathrm{cached}}(L)
\approx F_s(L)+B_s(L)
\approx 3F_s(L).
\]
Student-prefix reverse KL must additionally generate student rollouts online and
evaluate the teacher on those student-generated sequences:
\[
C_{\mathrm{on}}(P,R,L)
\approx G_s(P,R)+F_t(L)+F_s(L)+B_s(L)
\approx G_s(P,R)+F_t(L)+3F_s(L).
\]

For the Qwen3-0.6B student used in this work, the relevant configuration is
hidden size \(H=1024\), intermediate size \(I=3072\), number of layers
\(N=28\), number of attention heads \(16\), number of key-value heads \(8\),
head dimension \(128\), and vocabulary size \(V=151936\).
We count a multiply-add as two FLOPs.
For a single token, the approximate dense transformer cost is
\[
N\left[
2H(H+2H_{\mathrm{kv}})
+2H^2
+2\cdot 3HI
\right],
\]
where \(H_{\mathrm{kv}}=8\cdot128=1024\).
The three terms correspond to QKV projection, output projection, and SwiGLU MLP
projection.
This gives approximately \(0.763\) GFLOPs per token before the language-model
head.
Because our KL losses use full-vocabulary logits, the tied LM head contributes
\[
2HV = 2\cdot1024\cdot151936 \approx 0.311
\]
GFLOPs per token.
Thus the student forward cost is approximately \(1.075\) GFLOPs per token plus
the causal attention quadratic term.

For the 128-token setting, the measured average prompt length is
\(P\approx92\), and the teacher rollout length is effectively \(R\approx128\),
so \(L\approx220\) with batch size \(B=32\).
Under the above approximation, this yields
\[
F_s(L\approx220)\approx 7.73\ \mathrm{TFLOPs},
\qquad
B_s(L)\approx 15.46\ \mathrm{TFLOPs}.
\]
Therefore cached-logit teacher-prefix reverse KL costs
\[
C_{\mathrm{off}}^{\mathrm{cached}}
\approx 3F_s
\approx 23.19\ \mathrm{TFLOPs/step}.
\]

For student-prefix reverse KL, generation is counted using KV cache.
The generation cost consists of a prompt prefill plus cached decoding:
\[
G_s(P,R)
\approx \mathrm{prefill}(P)
+\sum_{r=1}^{R}\mathrm{decode}(P+r).
\]
With KV cache, this is comparable in FLOPs to a causal forward pass over the
final sequence length, although it is slower in wall-clock time because decoding
is sequential.
For \(P\approx92\) and \(R\approx128\), we estimate
\[
G_s(P,R)\approx 7.66\ \mathrm{TFLOPs}.
\]
For the Qwen3-4B teacher, using hidden size \(2560\), intermediate size \(9728\),
36 layers, 32 attention heads, 8 key-value heads, and the same vocabulary size,
we estimate
\[
F_t(L\approx220)\approx 53.13\ \mathrm{TFLOPs}.
\]
Hence
\[
C_{\mathrm{on}}
\approx 7.66 + 53.13 + 3\cdot 7.73
\approx 83.98\ \mathrm{TFLOPs/step}.
\]
The resulting compute ratio under cached teacher logits is
\[
\frac{C_{\mathrm{on}}}{C_{\mathrm{off}}^{\mathrm{cached}}}
\approx
\frac{83.98}{23.19}
\approx 3.62.
\]
Thus, under this accounting, one online student-prefix reverse-KL update costs
roughly \(3.6\) cached-logit teacher-prefix reverse-KL updates for the
128-token experiments.

\subsection{Fused Full-Vocabulary KL Kernel}
\label{app:fused-kl}

Full-vocabulary KL is required for our token-level objectives, but a naive
implementation is memory prohibitive in long-context distillation. If teacher
logits, student logits, probabilities, and tokenwise KL terms are explicitly
materialized, the dominant intermediate tensors scale as
$B\times L\times |\mathcal V|$. This is substantially more expensive than
top-$k$ teacher matching and quickly becomes the memory bottleneck.

We therefore implement a fused full-vocabulary KL kernel. The kernel streams over
the vocabulary dimension in tiles, computes the final projection for each tile,
updates the softmax normalizers with online log-sum-exp statistics, and
accumulates the KL contribution without storing full vocabulary-sized logits or
probability tensors. This is an exact reformulation of the same full-vocabulary
KL objective, not a top-$k$ or sampled approximation, and it does not change the
training data or loss. Following the same memory-saving principle as fused
large-vocabulary kernels such as Liger Kernel~\citep{ligerkernel}, the
implementation replaces $O(|\mathcal V|)$ materialized per-token intermediates
with tile-level computation and constant-size running statistics, making
long-context full-vocabulary KL practical for our experiments.

\begin{table}[t]
\caption{General multiple-choice evaluation after 128-token standalone
distillation with a Qwen3-4B teacher.
We report \texttt{acc} for MMLU and \texttt{acc\_norm} for ARC-Challenge,
HellaSwag, and PIQA; Avg. is the unweighted average across the four benchmarks.}
\label{tab:general_eval_128}
\centering
\small
\begin{tabular}{@{}lccccc@{}}
\toprule
Model & ARC-C & HellaSwag & MMLU & PIQA & Avg. \\
\midrule
Base & 44.88 & 53.50 & 52.45 & 70.13 & 55.24 \\
Off-policy + Forward KL & 45.73 & 53.49 & 51.87 & 69.91 & 55.25 \\
Off-policy + Reverse KL & 46.16 & 53.37 & 52.41 & 69.86 & 55.45 \\
On-policy + Forward KL & 46.33 & 53.15 & 51.82 & 70.08 & 55.34 \\
On-policy + Reverse KL & 46.25 & 53.27 & 52.25 & 70.08 & \textbf{55.46} \\
\bottomrule
\end{tabular}
\end{table}

\section{GRPO Follow-up Evaluation}
\label{app:grpo-eval}

Tables~\ref{tab:math_eval_128_grpo_full} and
\ref{tab:math_eval_4096_grpo_full} report the full mathematical reasoning
evaluation after GRPO initialized from the final 128-token and 4096-token
distillation checkpoints. We use the same evaluation protocol as in
Section~\ref{sec:setup_exp}: Avg@$k$ and Pass@$k$ use $k=3$ for GSM8K and
MATH500, and $k=5$ for AMC23 and AIME24. These results complement the training
dynamics in Section~\ref{sec:results} and show how different distillation
warmups affect post-RL performance across benchmarks.

\begin{table}[!t]
\caption{Math evaluation after GRPO initialized from 128-token distillation.
Len is the average response length.}
\label{tab:math_eval_128_grpo_full}
\centering
\scriptsize
\setlength{\tabcolsep}{2.6pt}
\resizebox{\textwidth}{!}{
\begin{tabular}{@{}lllcccccccccccc@{}}
\toprule
\multirow{2}{*}{Prefix} &
\multirow{2}{*}{Teacher} &
\multirow{2}{*}{KL} &
\multicolumn{3}{c}{GSM8K} &
\multicolumn{3}{c}{MATH500} &
\multicolumn{3}{c}{AMC23} &
\multicolumn{3}{c}{AIME24} \\
\cmidrule(lr){4-6}
\cmidrule(lr){7-9}
\cmidrule(lr){10-12}
\cmidrule(l){13-15}
& & & Avg@$k$ & Pass@$k$ & Len & Avg@$k$ & Pass@$k$ & Len & Avg@$k$ & Pass@$k$ & Len & Avg@$k$ & Pass@$k$ & Len \\
\midrule
\multirow{4}{*}{Student}
& \multirow{2}{*}{Qwen3-4B}
& Forward & 67.80 & 81.65 & 589 & 42.81 & 55.18 & 2299 & 22.00 & 40.00 & 4094 & 3.33 & 6.67 & 6000 \\
& & Reverse & 66.46 & 78.39 & 398 & 43.53 & 55.34 & 2165 & 26.00 & 47.50 & 3220 & 0.67 & 3.33 & 5701 \\
\cmidrule(lr){2-15}
& \multirow{2}{*}{Qwen3-8B}
& Forward & 66.14 & 80.29 & 1375 & 38.68 & 51.34 & 3293 & 16.50 & 37.50 & 5442 & 0.67 & 3.33 & 6576 \\
& & Reverse & 65.13 & 78.77 & 1565 & 40.63 & 52.50 & 3350 & 21.50 & 47.50 & 4515 & 3.33 & 6.67 & 5902 \\
\midrule
\multirow{4}{*}{Teacher}
& \multirow{2}{*}{Qwen3-4B}
& Forward & 68.71 & 83.62 & 820 & 42.75 & 54.94 & 2613 & 26.00 & 52.50 & 4960 & 2.00 & 6.67 & 7122 \\
& & Reverse & 66.09 & 79.30 & 574 & 41.68 & 54.20 & 2207 & 20.50 & 42.50 & 3846 & 0.00 & 0.00 & 5423 \\
\cmidrule(lr){2-15}
& \multirow{2}{*}{Qwen3-8B}
& Forward & 66.52 & 81.05 & 658 & 42.29 & 55.80 & 2131 & 21.50 & 47.50 & 3528 & 0.67 & 3.33 & 6030 \\
& & Reverse & 66.34 & 80.59 & 620 & 40.22 & 53.06 & 2214 & 17.50 & 40.00 & 3990 & 2.00 & 3.33 & 5932 \\
\bottomrule
\end{tabular}
}
\end{table}

\begin{table}[!t]
\caption{Math evaluation after GRPO initialized from 4096-token distillation.
Len is the average response length.}
\label{tab:math_eval_4096_grpo_full}
\centering
\scriptsize
\setlength{\tabcolsep}{2.6pt}
\resizebox{\textwidth}{!}{
\begin{tabular}{@{}lllcccccccccccc@{}}
\toprule
\multirow{2}{*}{Prefix} &
\multirow{2}{*}{Teacher} &
\multirow{2}{*}{KL} &
\multicolumn{3}{c}{GSM8K} &
\multicolumn{3}{c}{MATH500} &
\multicolumn{3}{c}{AMC23} &
\multicolumn{3}{c}{AIME24} \\
\cmidrule(lr){4-6}
\cmidrule(lr){7-9}
\cmidrule(lr){10-12}
\cmidrule(l){13-15}
& & & Avg@$k$ & Pass@$k$ & Len & Avg@$k$ & Pass@$k$ & Len & Avg@$k$ & Pass@$k$ & Len & Avg@$k$ & Pass@$k$ & Len \\
\midrule
\multirow{4}{*}{Student}
& \multirow{2}{*}{Qwen3-4B}
& Forward & 73.01 & 85.14 & 6335 & 47.61 & 59.64 & 7764 & 29.00 & 47.50 & 7917 & 4.00 & 10.00 & 7977 \\
& & Reverse & 61.51 & 76.50 & 2046 & 39.93 & 52.24 & 2993 & 21.00 & 42.50 & 4558 & 0.00 & 0.00 & 5815 \\
\cmidrule(lr){2-15}
& \multirow{2}{*}{Qwen3-8B}
& Forward & 59.54 & 76.04 & 8192 & 37.27 & 52.04 & 8192 & 21.00 & 40.00 & 8192 & 5.33 & 6.67 & 8192 \\
& & Reverse & 61.97 & 76.57 & 562 & 36.32 & 49.50 & 3146 & 16.00 & 27.50 & 5303 & 0.67 & 3.33 & 6917 \\
\midrule
\multirow{4}{*}{Teacher}
& \multirow{2}{*}{Qwen3-4B}
& Forward & 65.18 & 78.09 & 1021 & 40.55 & 54.24 & 2179 & 25.00 & 50.00 & 3419 & 0.67 & 3.33 & 5078 \\
& & Reverse & 62.07 & 77.41 & 2328 & 35.11 & 48.38 & 4632 & 14.00 & 32.50 & 5806 & 0.67 & 3.33 & 7210 \\
\cmidrule(lr){2-15}
& \multirow{2}{*}{Qwen3-8B}
& Forward & 67.45 & 80.44 & 643 & 41.63 & 54.76 & 1784 & 24.50 & 42.50 & 2662 & 2.67 & 6.67 & 3473 \\
& & Reverse & 68.87 & 82.26 & 5623 & 41.63 & 55.06 & 7426 & 20.50 & 40.00 & 7716 & 0.67 & 3.33 & 7727 \\
\bottomrule
\end{tabular}
}
\end{table}

\section{General-Domain Evaluation}
\label{app:general-eval-protocol}

To test whether math-domain distillation preserves broad language-model capabilities, we evaluate the distilled models with \texttt{lm-evaluation-harness} \citep{eval-harness} using the Hugging Face backend. We set \texttt{trust\_remote\_code=True}, \texttt{dtype=bfloat16}, and \texttt{batch\_size=auto}, and evaluate all models under the same multiple-choice scoring protocol. The benchmarks and few-shot settings are: MMLU with 5-shot prompting, reported with \texttt{acc}; ARC-Challenge with 25-shot prompting, reported with \texttt{acc\_norm}; HellaSwag with 10-shot prompting, reported with \texttt{acc\_norm}; and PIQA with 0-shot prompting, reported with \texttt{acc\_norm}.

As shown in Table~\ref{tab:general_eval_128}, this evaluation is conducted in the 128-token distillation setting, using Qwen3-4B-Base as the teacher and Qwen3-0.6B-Base as the student. All four distilled models improve the unweighted average over the base student model, with on-policy models achieving stronger averages under both forward and reverse KL. This trend is consistent with \citet{chen2025retainingdoingroleonpolicy}, who argue that on-policy data can help mitigate forgetting.

\section{Limitations and Broader Impact}
\label{sec:limitation_impact}
\paragraph{Limitations.} Our study focuses on mathematical reasoning, leaving open how the four decoupled objectives behave on other domains such as code generation or visual understanding. Our RL follow-up uses GRPO with an accuracy-only outcome reward; how the four objectives interact with other RL algorithms is left for future investigation.

\paragraph{Broader Impact.} By clarifying how KL direction and prefix source jointly shape distillation quality and downstream RL trainability, our analysis can help practitioners build smaller reasoning models more reliably and at lower compute cost, broadening access to capable open-weight reasoning systems. Since our method transfers behavior from a fixed teacher, it inherits but does not amplify the general risks of LLMs, and standard mitigations such as safety evaluation and responsible release practices remain applicable.


\end{document}